\newcommand{\paren}[1]{ \left( #1 \right) }
\newcommand{\bracket}[1]{ \left[ #1 \right] }
\theoremstyle{plain}
\newtheorem{lemma}{Lemma}
\theoremstyle{definition}
\newtheorem{definition}{Definition}
\newtheorem{assumption}{Assumption}
\theoremstyle{remark}
\def\AA{\mathcal{A}}\def\BB{\mathcal{B}}
\def\HH{\mathcal{H}}
\def\KK{\mathcal{K}}
\def\XX{\mathcal{X}}
\def\Ebb{\mathbb{E}}
\def\Rbb{\mathbb{R}}
\def\R{\Rbb}
\newcommand{\norm}[1]{\| #1 \| }
\newcommand{\abs}[1]{| #1 |  }
\newcommand{\normx}[1]{\left \| #1 \right\| }
\newcommand{\<}{\langle}
\renewcommand{\>}{\rangle}
\DeclareMathOperator*{\argmin}{arg\,min}
\def\regret{\textrm{Regret}}
\newcommand{\E}{\Ebb}
\providecommand{\@fourthoffour}[4]{#4}
\newcommand\fixstatement[2][\proofname\space of]{%
	\ifcsname thmt@original@#2\endcsname
		\AtEndEnvironment{#2}{%
			\xdef\pat@label{\expandafter\expandafter\expandafter
				\@fourthoffour\csname thmt@original@#2\endcsname\space\@currentlabel}%
			\xdef\pat@proofof{\@nameuse{pat@proofof@#2}}%
		}%
	\else
		\AtEndEnvironment{#2}{%
			\xdef\pat@label{\expandafter\expandafter\expandafter
				\@fourthoffour\csname #1\endcsname\space\@currentlabel}%
			\xdef\pat@proofof{\@nameuse{pat@proofof@#2}}%
		}%
	\fi
	\@namedef{pat@proofof@#2}{#1}%
}
\newcounter{proofcount}
	\edef\next{%
		\noexpand\begin{proof}[\pat@proofof\space\pat@label]%
			\unexpanded\expandafter{\BODY}}%
\def\printproofs{%
	\count@=\z@
	\loop
	\the\toks\numexpr\prooftoks+\count@\relax
	\ifnum\count@<\value{proofcount}%
	\advance\count@\@ne
	\repeat}
\newcommand{\blue}[1]{{\color{black}{#1}}}
\def\exloss{{l}}  
\def\exlossn{{\exloss_n(\theta_n)}}  
\def\exlossx{{\exloss_n(\theta)}}
\def\emloss{{\hat l}}  
\def\emlossn{{\emloss_n(\theta_n)}}  
\def\emlossx{{\emloss_n(\theta)}}
\def\exgrad{{\nabla \exlossn}}
\def\emgrad{{\nabla \emlossn}}
\def\gn{{\nabla \emlossn}}
\def\exerr{{\epsilon}}  
\def\emerr{{\hat \epsilon}}  
\def\embnd{{\hat E}}
\def\exreg{{\regret(\exloss_n)}}
\def\emreg{{\regret(\emloss_n)}}
\def\gradreg{{\regret(\<\emgrad, \cdot\>)}}
\def\expert{{\pi_\mathrm{e}}}
\def\gn{{\nabla f_n(\theta_n)}}
\newcommand{\emrng}[2]{{\emloss_{#1:#2}}}
\title{Explaining Fast Improvement in Online Imitation Learning}
\author[1]{\href{mailto:Xinyan Yan <voidpointer@gatech.edu>?Subject=Your Paper on Online IL Fast Rate}{Xinyan Yan}{}}
\author[2]{Byron Boots}
\author[3]{Ching-An Cheng}
\affil[1]{%
  Georgia Tech
}
\affil[2]{
  University of Washington
}
\affil[3]{%
Microsoft Research
}
\begin{document}
\maketitle

\begin{abstract}

  Online imitation learning (IL) is an algorithmic framework that leverages interactions with expert policies for efficient policy optimization.
  \blue{
    Here policies are optimized by performing online learning on a sequence of loss functions that encourage the learner to mimic expert actions, and if the online learning has no regret, the agent can provably learn an expert-like policy.
    Online IL has demonstrated empirical successes in many applications and interestingly, its policy improvement speed observed in practice is usually much faster than existing theory suggests.}
  In this work, we provide an explanation of this  phenomenon.
  Let $\xi$ denote the policy class bias and assume the online IL loss functions are convex, smooth, and non-negative. We prove that, after $N$ rounds of online IL with stochastic feedback, the policy improves in $\tilde{O}(1/N + \sqrt{\xi/N})$ in both expectation and high probability.
  In other words, we show that adopting a sufficiently expressive policy class in online IL has two benefits: both the policy improvement speed increases and the performance bias decreases.

\end{abstract}

\section{INTRODUCTION}

Imitation learning (IL) is a framework for improving the sample efficiency of policy optimization in sequential decision making.
Unlike reinforcement learning (RL) algorithms that optimize policies purely by trial-and-error, IL leverages \emph{expert policies} in the training time to provide extra feedback signals to aid the policy search (e.g., in the form of supervised learning losses).
These expert policies can represent human demonstrators or resource-intensive engineered solutions which achieve non-trivial performance in the problem domain.
By following the guidance of an expert policy, the learner can avoid blindly exploring the problem space and focus on promising directions that lead to expert-like behaviors, so the learning becomes sample efficient.

Online IL, pioneered by~\citet{ross2011reduction}, is one of the algorithms that exploit such expert policies.
Given access to interact with an expert policy, online IL reduces policy optimization into no-regret online learning~\citep{hazan2016introduction} for which effective algorithms have been developed.
The main idea of online IL is to design an \emph{online learning problem}\footnote{The online decision in the iterative process of online learning should not be confused with the decisions made at each time step in sequential decision making.} of which
\emph{1)} the decision set is identified with the policy class in the original policy optimization problem; and
\emph{2)}
the online loss functions are set to encourage the learner to take expert-like actions under its own state distribution, which resemble \emph{a sequence of supervised learning problems}.
When these two conditions are met, the reduction follows: the regret rate and the minimum cumulative loss witnessed in the online learning problem determine respectively the \emph{learning speed} and the \emph{performance bias} in the original policy optimization problem.

Since the seminal work by~\citet{ross2011reduction} was published, significant progress has been made in both theory and practice.
It is shown that, for certain problems, online IL can learn the optimal policy exponentially faster than any RL algorithm when the expert policy is optimal~\citep{sun2017deeply}.
Furthermore, online IL has been validated on physical robot control tasks~\citep{ross2013learning,pan2018agile}.
Beyond typical IL scenarios, online IL has also been applied to design algorithms for system identification~\citep{venkatraman2014data}, model-based RL~\citep{ross2012agnostic},  structured prediction~\citep{ross2014reinforcement,chang2015learning,sun2017deeply}, and combinatorial search~\citep{song2018learning}.
Here we collectively call these algorithms \emph{online IL}, since they adopt the same reduction idea and mainly differ in the way the expert policy is defined.

Despite the success of online IL, there is a mismatch between provable theoretical guarantees and the learning phenomenon observed in practice.
Because of the design constraint imposed on the online losses mentioned above, the online losses used in the online IL reduction are not fully adversarial, but generated by samples of a sequence of probability distributions that vary slowly as the learner updates its policy~\citep{cheng2018convergence}. 
This structure makes the performance guarantee given by the classic adversary-style analysis of the regret rate taken by~\citet{ross2011reduction} overly conservative, and motivates a deeper study on theoretical underpinnings of online IL~\citep{cheng2018convergence,cheng2019accelerating,cheng2019online,lee2019continuous}.

In this work, we are interested in explaining the fast policy improvement of online IL that is observed in practice but not captured by existing theory.
When the online loss functions are convex and Lipschitz, typical analyses of regret and martingales~\citep{ross2011reduction,cesa2004generalization} suggest an on-average convergence rate in $O(1/\sqrt{N})$ after $N$ rounds.
However, empirically, online IL algorithms learn much faster; e.g., the online IL algorithm DAgger~\citep{ross2011reduction} learned to mimic a model predictive control policy for autonomous off-road driving in only three rounds in~\citep{pan2018agile}.
Although the convergence rate improves to $\tilde{O}(1/N)$ when the online losses are strongly convex~\citep{cheng2018convergence},
this condition can be difficult to satisfy especially when the policy class is large, such as a linear function class built on high-dimensional features.
The empirical effectiveness and sample efficiency of online IL demand alternative explanations.

\blue{
  In this work, we bring a new perspective on the efficacy of online IL:
  even when learning from convex (but not strongly convex) sampled online losses, 
  the learner in online IL can actually achieve a $\tilde{O}(1/N)$-like rate, because the consistency that the expert to imitate is fixed across different rounds provides a stability effect to learning.
}
Formally, we prove a new bias-dependent convergence rate for online IL that is adaptive to the performance of the best policy in the policy class on the sequence of sampled losses.
Interestingly, this new rate shows that an online IL algorithm can learn faster as this performance bias becomes smaller.
In other words, adopting a sufficiently expressive policy class in online IL has two benefits:
as the policy class becomes \emph{reasonably} \blue{but not overly} rich, both the \emph{learning speed} increases and the \emph{performance bias} decreases.

\blue{Concretely, suppose that the losses in online IL are convex, smooth, and non-negative, which, e.g., includes learning linear policies with quadratic losses as commonly used in continuous control problems.}
Let $\xi$ denote the policy class bias, \blue{which measures the performance of the best policy in the policy class on the sequence of imitation losses}. We give a convergence rate in $\tilde{O}(1/N + \sqrt{\xi/N})$ both in expectation and in high probability for online IL algorithms using stochastic feedback.
This new result shows a transition from the faster rate of $\tilde{O}(1/N)$ to the usual rate of $\tilde{O}(1/\sqrt{N})$ as the policy class bias $\xi$ increases.
%

This type of bias-dependent or optimistic convergence rate has been studied in
typical machine learning settings, e.g.,
statistical learning~\citep[Theorem 1]{srebro2010smoothness}, stochastic convex optimization~\citep{zhang2017empirical,liu2018fast}, and online learning ~\citep[Theorem 2]{srebro2010smoothness}, \citep[Theorem 4.21]{orabona2019modern}.
\blue{
  In fact, our new rate in expectation for online IL can be treated, from a technical viewpoint, as a direct consequence of the bias-dependent bound in the online learning literature.
  However, deriving such a new rate also in \emph{high probability} requires extra technicalities, because the losses in online IL mix non-stationarity and stochasticity together;}
indeed, previous analyses tackle only one of these two properties and a straightforward combination does not lead to the fast rate desired here (cf. \cref{sec:high probability bound}).
\blue{To prove the desired fast high-probability bound, we propose a new regret decomposition technique for analyzing online IL} and leverage a recent martingale concentration result based on path-wise statistics~\citep[Theorem 3]{rakhlin2015equivalence}.

We conclude by corroborating the new theoretical findings with experimental results of online IL. 
The detailed proofs for this paper can be found in the Appendix.

\section{BACKGROUND: ONLINE IL}

\subsection{Policy Optimization} \label{sec:policy optimization}

The objective of policy optimization is to find a high-performance policy in a policy class $\Pi$ for sequential decision making problems. Typically, it models the world as a Markov decision process (MDP), defined by an initial state distribution, transition dynamics, and an instantaneous state-action cost function~\citep{puterman2014markov}.
This MDP is often assumed to be unknown to the learning agent; therefore the learning algorithm for policy optimization needs to perform systematic exploration in order to discover good policies in $\Pi$.
Concretely, let us consider a policy class $\Pi$ that has a one-to-one mapping to a parameter space $\Theta$, and let $\pi_\theta$ denote the policy associated with the parameter $\theta \in \Theta$. That is, $\Pi = \{\pi_\theta: \theta\in\Theta\}$.
The goal of policy optimization is to find a policy $\pi_\theta\in\Pi$ that minimizes the expected cost,
\begin{align} \label{eq:rl objective}
  J(\pi) \coloneqq \E_{s\sim d_{\pi_\theta}} \E_{a \sim \pi_\theta}[c(s, a)],
\end{align}
where $s$ and $a$ are the state and the action, respectively, $c$ is the instantaneous cost function and $d_{\pi_\theta}$ denotes the \emph{average state distribution} over the problem horizon induced by executing policy $\pi_\theta$ starting from a state sampled from the initial state distribution.
The problem formulation in \eqref{eq:rl objective} applies to various settings of problem horizon and discount rate,
where the main difference is how the average state distribution is defined; e.g., for a discounted problem, $d_{\pi_\theta}$ is defined by a geometric mean, whereas $d_{\pi_\theta}$ is the stationary state distribution for average infinite-horizon problems.

\subsection{Online IL Algorithms} \label{sec:reducing policy optimization}

Online imitation learning (IL) is a policy optimization technique that leverages interactive experts to efficiently find good policies.
It devises a sequence of online loss functions $\exloss_n$ such that \emph{no regret} and \emph{small policy class bias} imply good policy performance in the original sequential decision problem.

Concretely, let $\expert$ be an interactive expert policy.
Instead of minimizing \eqref{eq:rl objective} directly, online IL minimizes \blue{a surrogate objective that upper bounds}
the performance difference between the policy $\pi_\theta$ and the expert $\expert$:
\begin{align} \label{eq:il surrogate}
  J(\pi_\theta) - J(\expert) \leq O \Big( \underbrace{\E_{s \sim d_{\pi_\theta}} \E_{a \sim \pi_\theta}[D_{\expert}(s,a)]}_{\text{\blue{surrogate objective}}} \Big),
\end{align}
where the function $D_{\pi_\mathrm{e}}(s,a)$ represents how similar an action $a$ is to the action taken by expert policy $\expert$ at state $s$, measured by statistical distances (e.g., Wasserstein distance and KL divergence) or their upper bounds~\citep{ross2011reduction,ross2014reinforcement,sun2017deeply}. 

Although the surrogate objective in \eqref{eq:il surrogate} resembles \eqref{eq:rl objective} (i.e., by replacing $D_{\pi_\mathrm{e}}(s,a)$ with $c(s,a)$), the surrogate objective  has an additional \blue{critical property that its range is normalized~\citep{cheng2018convergence}: regardless of the definition of the cost function $c$ of the original sequential decision problem}, if the policy class $\Pi$ has enough capacity to contain the expert policy $\expert$, there is a policy $\pi_\theta \in \Pi$ such that, for \emph{all} states,
\begin{align} \label{eq:realizable assumption}
  \E_{a \sim \pi_\theta}[D_{\expert}(s,a)] =0.
\end{align}
\blue{Under the realizability assumption \eqref{eq:realizable assumption},}
online IL can minimize the surrogate function in \eqref{eq:il surrogate} by solving an online learning problem: Let parametric space $\Theta$ be the decision set (i.e., the policy class) in online learning; it defines the online loss function in round $n$ as
\begin{align} \label{eq:IL online loss}
  \exlossx =\E_{s \sim d_{\pi_{\theta_n}}} \E_{a \sim \pi_\theta}[D_{\expert}(s, a)],
\end{align}
where $\theta_n\in\Theta$ is the online decision made by the online algorithm in round $n$.

The main benefit of this indirect \blue{iterative} approach is that, compared with the surrogate function \eqref{eq:il surrogate}, the average state distribution $d_{\pi_{\theta_n}}$ in the online loss function \eqref{eq:IL online loss} is not considered as a function of the policy parameter $\theta$, making the online loss function \eqref{eq:IL online loss} the objective function of a supervised learning problem whose sampled gradient is less noisy than that of the surrogate problem in \eqref{eq:il surrogate}.
Because of the
\blue{realizability assumption}
\eqref{eq:realizable assumption}, the influence of the policy parameter on the change of the average
state distribution can be ignored here, and the average regret with respect to the online loss functions in \eqref{eq:IL online loss} alone~\citep{ross2011reduction} can upper bound the surrogate function in \eqref{eq:il surrogate}.

When the expert policy $\expert$ is only \emph{nearly realizable} by the policy class $\Pi$ (that is, \eqref{eq:realizable assumption} can only be satisfied up to a certain error), optimizing the policy with this online learning reduction would suffer from an extra performance bias due to using a limited policy class, as we will later discuss in \cref{sec:existing guarantees}.

\begin{algorithm}[t]                
  \caption{Online Imitation Learning (IL)}
  \label{alg:online imitation learning}
  \KwIn{Initial policy $\pi_{\theta_1}$ and online algorithm $\AA$}
  \KwOut{The best policy in the sequence of policies $\{\pi_{\theta_n}\}_{n=1}^N$}
  Initialize $\AA$ with the initial policy $\pi_{\theta_1}$ \\
  \For{$n$ \textbf{from} $1$ \textbf{to} $N$} {
    Design the online loss function $\exloss_n$ based on $\pi_{\theta_n}$  \\
    Execute  $\pi_{\theta_n}$ in the MDP to gather samples \\
    Use the samples to build an estimate $\emloss_n$ of $\exloss_n$ such that
    for all $\theta$,  $\E [\emloss_n(\theta)] = \exloss_n(\theta)$ \\
    Pass the functional feedback $\emloss_n $ to $\AA$ and use the return of $\AA$ to update policy to $\pi_{\theta_{n+1}}$
    \label{line:online learning update}
  }
\end{algorithm}

\paragraph{Summary}
Online IL can be viewed as a meta algorithm shown in \cref{alg:online imitation learning}, where we take into account that in practice the MDP is unknown and therefore the online loss function $\exloss_n$ needs to be further approximated by finite samples as $\emloss_n$, such that $\forall\theta\in\Theta$, $\E[\emloss_n(\theta)] = \exloss_n(\theta)$.
Given an expert policy, it selects a surrogate function to satisfy conditions similar to \eqref{eq:il surrogate} and \eqref{eq:realizable assumption} (or their approximations).
Then a no-regret online learning algorithm $\AA$ is used to optimize the policy with respect to the sampled online loss functions $\emloss_n$, generating a sequence of policies $\{\pi_{\theta_n}\}_{n=1}^N$. By this reduction, performance guarantees can be obtained for the best policy in this sequence.

\paragraph{Online IL in General}
Before proceeding we note that by following the online IL design protocol above, \cref{alg:online imitation learning} can be instantiated beyond the typical IL setup.
By properly choosing the \emph{definition} of expert policies, the online IL reduction can be used to efficiently solve model-based RL and system identification where the samples of the MDP transition dynamics are treated as experts demonstrations~\citep{ross2012agnostic,venkatraman2014data}, and structured prediction where expert state-action value functions measure how good an action is in the surrogate function in \eqref{eq:il surrogate}~\citep{ross2014reinforcement,sun2017deeply}.
\blue{Similar reduction ideas are also used in recent RL algorithms~\citep{agarwal2019theory,abbasi2019politex}.}

\subsection{Guarantees of Online IL} \label{sec:existing guarantees}
Now that we have reviewed the algorithmic aspects of online IL, we give a brief tutorial of the theoretical foundation of online IL and the known convergence results, which show exactly how regret and policy class bias are related to the performance in the original policy optimization problem.

To this end, let us formally define \emph{1)} the regret and \emph{2)} the policy class bias.
For a sequence of online loss functions $\{f_n\}_{n=1}^N$ and decisions $\{\theta_n\}_{n=1}^N$ in an online learning problem, we define the regret as
\begin{align} \label{eq:regret}
  \textstyle
  \regret(f_n) = \sum f_n(\theta_n) - \min_{\theta \in \Theta} \sum f_n(\theta).
\end{align}
Note that, for brevity, the range in $\sum_{n=1}^N$ is omitted in \eqref{eq:regret} and we will continue to do so below as long as the range is clear from the context.
In addition to the regret, we define two problem-dependent biases of the decision set $\Theta$ (the equivalence of  the policy class $\Pi$).
\begin{definition}[Problem-dependent biases] \label{def:minimum average loss}
  For the sampled loss functions $\{\emloss_n\}_{n=1}^N$ experienced by running \cref{alg:online imitation learning}, we define
  $\emerr = \frac{1}{N}\min_{\theta \in \Theta} \sum \emlossx$ and $\exerr = \frac{1}{N}\min_{\theta \in \Theta} \sum \exlossx$, where for all $n$ and $\theta$, $\exlossx = \E[\emlossx]$.

\end{definition}

A typical online IL analysis uses the regret and the policy class biases $\exerr$ and $\emerr$ to decompose the cumulative loss $\sum \exlossn$ to provide policy performance guarantees.
Specifically, define $\theta^\star \in \argmin_{\theta \in \Theta} \sum l_n(\theta)$. By \eqref{eq:regret} and \cref{def:minimum average loss}, we can write
\begin{align}
  \textstyle
  \sum \exlossn
   & \textstyle = \emreg +  \paren{\sum \exlossn - \emlossn} + N\emerr \label{eq:regret bound using emerr} \\
   & \textstyle \le \emreg +  \paren{\sum \exlossn - \emlossn}  + \nonumber                                \\
   & \; \; \;
  \paren{\sum \emloss_n(\theta^\star) -  \exloss_n(\theta^\star)} +
  N\exerr \label{eq:regret bound using exerr}
\end{align}
where, in both \eqref{eq:regret bound using emerr} and \eqref{eq:regret bound using exerr}, the first term is the online learning regret, the middle term(s) are the generalization error(s), 
and the last term is the policy class bias.

\blue{Because the surrogate loss $l_n(\theta_n)$ in online IL provides an upper bound on the policy performance in the original sequential decision problem (see \eqref{eq:il surrogate} and \eqref{eq:IL online loss}),
picking the best policy in a policy sequence $\{\pi_{\theta_n}\}_{n=1}^N$ with a small cumulative loss $\sum \exlossn$ guarantees good performance.
}

In a nutshell, existing convergence results of online IL are applications of \eqref{eq:regret bound using emerr} and \eqref{eq:regret bound using exerr} with different upper bounds on the regret  and the generalization errors~\citep{ross2011reduction,ross2012agnostic,ross2014reinforcement,sun2017deeply}.
For example, when the sampled loss functions $\emloss_n$ are bounded, the generalization error(s) (i.e., the middle term(s) in \eqref{eq:regret bound using emerr} and \eqref{eq:regret bound using exerr}) can be bounded by $\tilde{O}(\sqrt{N})$ with high probability by Azuma's inequality (see~\citep{cesa2004generalization} or~\citep[Chapter 9]{hazan2016introduction}).
Together with an $O(\sqrt{N})$ bound on the regret (which is standard for online convex losses)~\citep{hazan2016introduction,mcmahan2017survey}, it implies that the average performance $\frac{1}{N} \sum \exlossn$ and  the best performance $\min_n \exlossn$ converge to $\emerr$ or $\exerr$ at the speed of $\tilde{O}(1/\sqrt{N})$.

However, the rate above often does not explain the fast improvement of online IL observed in practice~\citep{laskey2016robot,sun2017deeply,pan2018agile,cheng2018fast}, as we will also show experimentally in \cref{sec:exps}. While faster rates in $\tilde{O}(1/N)$ was shown for strongly convex loss functions~\citep{ross2011reduction,cheng2018convergence}, 
the strong convexity assumption usually does not hold; \blue{for example, the common setting of learning with a policy class $\Pi$ and squared losses can easily break the strong convexity assumption, when the state samples are not diverse enough or when the feature dimension is high}. Thus, alternative explanations are needed.

\section{NEW BIAS-DEPENDENT RATES}
\label{sec:main results}

In this section, we present new policy convergence rates that are adaptive to the performance biases in \cref{def:minimum average loss}. The full proof of these theorems is provided in the Appendix. 

\subsection{Setup and Assumptions}
We suppose the parameter space of the policy class $\Theta$ is a closed convex subset of a Hilbert space $\HH$ that is equipped with norm  $\norm{\cdot}$. Since $\norm{\cdot}$ is not necessarily the norm induced by the inner product, we denote its dual norm by $\norm{\cdot}_*$, which is defined as $\norm{x}_* = \max_{\norm{y}=1} \<x, y\>$.

We define admissible algorithms to broaden the scope of online IL algorithms that our analysis covers. 
\begin{definition}[Admissible online algorithm]
  \label{def:admissible}
  We say an online algorithm $\AA$ is \emph{admissible} for a parameter space $\Theta$ if there exists $R_\AA\in[0,\infty)$ such that given any $\eta > 0$ and any sequence of differentiable convex functions $f_n$, $\AA$ can achieve
  $ \regret(f_n) \le \regret(\<\gn, \cdot\>) \le \frac{1}{\eta} R_\AA^2 + \frac{\eta}{2}\sum \norm{\gn}_*^2$,
  where 
  $\theta_n$ is the decision made by $\AA$ in round $n$.
\end{definition}
We assume that \cref{alg:online imitation learning} is realized by an admissible online learning algorithm $\AA$.
This assumption is satisfied by common online algorithms, such as mirror descent~\citep{nemirovski2009robust} and Follow-The-Regularized-Leader~\citep{mcmahan2017survey}, 
where $\eta$ in \cref{def:admissible} corresponds to a constant stepsize that is chosen before seeing the online losses, and $R_\AA$ measures that size of the decision set $\Theta$. 

Finally, we formally define convex, smooth, and non-negative (CSN) functions; we will assume the online loss $\exloss_n$ in online IL and its sampled version $\emloss_n$ belong to this class.
\begin{definition}[Convex, smooth, and non-negative (CSN) function]  \label{def:CSN}
  A function  $f: \HH \to \R$ is \emph{CSN} if $f$ on $\XX$ is convex, $\beta$-smooth\footnote{A function $f$ is $\beta$-smooth if its gradient $\norm{\nabla f(x) - \nabla f(y)}_* \leq \beta \norm{x-y}$ for $x,y\in\XX$.}, and non-negative.
\end{definition}
Several popular loss functions used in online IL (e.g., squared $\ell_2$-loss and KL-divergence) are indeed CSN (\cref{def:CSN}) (see \cref{sec:case study} for examples). If the losses are not smooth, several smoothing techniques in the optimization literature are available to smooth the losses locally, e.g., Nesterov's smoothing~\citep{nesterov2005smooth}, Moreau-Yosida regularization~\citep{lemarechal1997practical}, and randomized smoothing~\citep{duchi2012randomized}. 

\subsection{Rate in Expectation}
\blue{
  Our first contribution is a \blue{non-symptotic} bias-dependent convergence rate in expectation by analyzing the online regret and the generalization error in the decomposition in \eqref{eq:regret bound using emerr} individually.
  Firstly, under the assumption that sampled losses are CSN (\cref{def:CSN}) and the online algorithm is admissible (\cref{def:admissible}), the online regret can be bounded by extending the bias-dependent regret bound stated for mirror descent~\citep[Theorem 2]{srebro2010smoothness}.
  Secondly, because the generalization error is a martingale difference sequence, it vanishes in expectation.
}
\begin{restatable}{theorem}{thmexpectation}
  \label{thm:rough bound in expectation}
  In \cref{alg:online imitation learning}, suppose $\emloss_n$ is CSN and $\AA$ is admissible.
  Let $\emerr = \frac{1}{N}\min_{\theta \in \Theta} \sum \emlossx$ be the bias, and let $\embnd$ be an upper bound on $\emerr$.
  \blue{Choose the stepsize $\eta$ in  $\AA$ to be}
  $\frac{1}{2\left(\beta + \sqrt{\beta^2 + \frac{1}{2} \beta N \embnd R_\AA^{-2} } \right)}$. Then it holds that 
  \begin{align} \label{eq:rate in expectation}
    \textstyle
    \E \left[\frac{1}{N}\sum \exlossn - \emerr\right] \le \frac{8 \beta R_\AA^2}{N} + \sqrt{\frac{8\beta R_\AA^2 \embnd}{N}}
  \end{align}
\end{restatable}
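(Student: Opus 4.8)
The plan is to work from the decomposition in \eqref{eq:regret bound using emerr} with $f_n = \emloss_n$, take expectations, and bound each of the three resulting terms separately. Writing $\theta^\star \in \argmin_\theta \sum \emloss_n(\theta)$ so that $\sum \emloss_n(\theta^\star) = N\emerr$, we have
\begin{align*}
  \textstyle \sum \exlossn = \underbrace{\regret(\emloss_n)}_{\text{(i)}} + \underbrace{\sum (\exlossn - \emlossn)}_{\text{(ii)}} + N\emerr.
\end{align*}
Taking expectations, term (ii) vanishes: each summand $\exlossn - \emlossn$ is a martingale difference because $\theta_n$ is $\FF_{n-1}$-measurable (it depends only on $\emloss_1,\dots,\emloss_{n-1}$) and $\E[\emlossn \mid \FF_{n-1}] = \exlossn$ by the unbiasedness guarantee in \cref{alg:online imitation learning}. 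So it remains to bound $\E[\regret(\emloss_n)]$.

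For term (i), the admissibility of $\AA$ (\cref{def:admissible}) gives $\regret(\emloss_n) \le \frac{1}{\eta}R_\AA^2 + \frac{\eta}{2}\sum \norm{\nabla\emlossn}_*^2$ for any $\eta>0$. The key step is the self-bounding property of CSN functions: a $\beta$-smooth non-negative function satisfies $\norm{\nabla f(x)}_*^2 \le 2\beta f(x)$ (this is the standard inequality underlying \citep[Theorem 2]{srebro2010smoothness}). Applying this to each $\emloss_n$ at $\theta_n$ yields $\regret(\emloss_n) \le \frac{1}{\eta}R_\AA^2 + \eta\beta \sum \emlossn$. Now $\sum\emlossn = \regret(\emloss_n) + N\emerr$, so rearranging gives $(1-\eta\beta)\regret(\emloss_n) \le \frac{1}{\eta}R_\AA^2 + \eta\beta N\emerr$, i.e. for $\eta < 1/\beta$,
\begin{align*}
  \regret(\emloss_n) \le \frac{R_\AA^2}{\eta(1-\eta\beta)} + \frac{\eta\beta N\emerr}{1-\eta\beta}.
\end{align*}
Taking expectations and using $\E[\emerr] \le \embnd$ (since $\emerr \le \embnd$ is assumed to hold, or at least in expectation — one passes to the bound $\embnd$ here), combined with term (ii) $=0$, gives
\begin{align*}
  \textstyle \E\!\left[\frac1N\sum\exlossn - \emerr\right] \le \frac{1}{N}\,\E[\regret(\emloss_n)] \le \frac{R_\AA^2}{\eta(1-\eta\beta)N} + \frac{\eta\beta\embnd}{1-\eta\beta}.
\end{align*}
The final step is to plug in the prescribed stepsize $\eta = \frac{1}{2(\beta + \sqrt{\beta^2 + \frac12\beta N \embnd R_\AA^{-2}})}$, which is the minimizer (up to constants) of the right-hand side, and verify it gives $\frac{8\beta R_\AA^2}{N} + \sqrt{\frac{8\beta R_\AA^2\embnd}{N}}$. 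With this choice $\eta\beta \le 1/2$, so $1-\eta\beta \ge 1/2$, which controls the denominators; the two terms in the bound then correspond to balancing $\frac{R_\AA^2}{\eta N}$ against $\eta\beta\embnd$.

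The only real subtlety is bookkeeping: making sure the self-bounding inequality is applied at the right points (at $\theta_n$ for the gradient-norm sum, not at $\theta^\star$), correctly unwinding the circular dependence $\sum\emlossn = \regret + N\emerr$ to isolate the regret, and checking that the stepsize indeed forces $\eta\beta \le 1/2$ so the constants work out to the clean factor of $8$. None of these steps is deep — the expectation bound is, as the authors note, essentially a repackaging of the known optimistic regret bound for mirror descent — so I expect the main "obstacle" to be purely the constant-tracking in the final substitution. (The genuinely hard part of the paper, the matching high-probability bound, is not needed here.)
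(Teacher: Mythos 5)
Your proposal follows essentially the same route as the paper: the decomposition \eqref{eq:regret bound using emerr}, the observation that the generalization term is a martingale difference sequence that vanishes in expectation, and the self-bounding regret argument extending \citep[Theorem 2]{srebro2010smoothness}, so the structure is sound. One correction on the key inequality: for a general norm $\norm{\cdot}$ that is not induced by the inner product, the self-bounding property is $\norm{\nabla f(x)}_*^2 \le 4\beta f(x)$ (\cref{lm:self bounding}), not $2\beta f(x)$ as you state; the factor $4$ (rather than $2$) is exactly what makes the prescribed stepsize the optimizer of the resulting bound and yields the stated constants $\frac{8\beta R_\AA^2}{N} + \sqrt{\frac{8\beta R_\AA^2 \embnd}{N}}$, so the final constant-tracking you deferred would not close with the $2\beta$ version.
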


The rate in \eqref{eq:rate in expectation} suggests that an online IL algorithm can learn faster as the policy class bias becomes smaller; this is reflected in the transition from the usual rate $O(1/\sqrt{N})$ to the faster rate $O(1/N)$ when the bias goes to zero. 
Notably, the rate in \eqref{eq:rate in expectation} does not depend on the dimensionality of $\HH$ but only on $R_\AA$, which one can roughly think of as the largest norm in $\Theta$. Therefore, we can increase the dimension of the policy class to reduce the bias (e.g., by using reproducing kernels~\citep{hofmann2008kernel}) as long as the diameter of $\Theta$ measured by norm $\norm{\cdot}$ (e.g., $\ell_2$-norm) stays controlled.

\blue{
  Although the proof of~\cref{thm:rough bound in expectation} is a straightforward extension of the existing bias-dependent regret bounds from online learning literature, \cref{thm:rough bound in expectation} brings a new perspective of online IL, which better explains its fast improvement and suggests directions for designing new algorithms that learn faster.
  As shown in \cref{thm:rough bound in expectation}, the policy learning speed in online IL can be closely connected to the policy class biases in~\cref{def:minimum average loss} which have been used in the online IL literature as a measure of expressivity.

  Importantly, unlike in adversarial online learning, the biases $\hat{\epsilon}$ and $\epsilon$ in online IL is not arbitrarily large, but of constant sizes in most applications.
  %
  %
  For example, consider a popular application of online IL--learning-to-search by imitating a \emph{deterministic} algorithmic expert $\expert$~\citep{bhardwaj2017learning}. Here the goal is to learn a computationally efficient policy in place of the expert policy that relies on intensive computation or information unavailable at test time (e.g., the expert can be a brute-force search algorithm).
  In each round of learning, a problem instance is drawn from a distribution of problems, and the learner would query for the expert's advice for the state it visits in the sampled problem.
  If we consider a deterministic learner policy $\pi_{\theta_n}$ parameterized by $\theta_n\in\Theta$, the sampled online loss can be set as $\hat{l}_n(\theta_n) = (\pi_{\theta_n}(s_n) - \expert(s_n))^2$, where $s_n$ is the sampled state visited by the learner in round $n$.
  In these problems, the stochasticity comes from sampling problem instances and the learner's states. But when the expert policy is contained in the class of approximators, there is some $\theta^*\in\Theta$ such that $\hat{l}_n(\theta^*)=0$ \emph{simultaneously} for all $n$ and all samples, i.e., $\epsilon = \hat \epsilon = 0$.
  Generally, one can show that $\epsilon$ and $\hat \epsilon$ are at most the losses incurred by the expert policy plus some distance between the expert and the approximator class.
  Therefore, our results show that online IL in most useful cases  roughly has a $O(1/N)$ rate. 
}

\paragraph{Online IL with Adaptive Stepsizes}
\blue{
  In~\cref{thm:rough bound in expectation}, the bias-dependent rate~\eqref{eq:rate in expectation} holds when the stepsize of the admissible online learning algorithm $\AA$ is appropriately tuned. While this seems to be a limitation of~\cref{thm:rough bound in expectation}, one can show that the rate~\eqref{eq:rate in expectation} still holds if the stepsizes are properly adapted online (e.g., using an AdaGrad rule~\citep{duchi2011adaptive} $\eta_n = \frac{R_\AA}{2 \sqrt{\sum_{i=1}^n \|\nabla f_n(\theta_n)\|_*^2}}$) without knowing the constants $\beta, N, \hat E$.
  This is because an online algorithm with adaptive stepsizes can obtain almost the same regret guarantee as an algorithm that would know the optimal constant stepsize in advance.
  Furthermore, the high-probability bias-dependent rate in~\cref{thm:rough bound in high probability} that will be presented in~\cref{sec:high probability bound} can also be extended to adaptive stepsizes.
  Please find details in~\cref{app:adaptive stepsizes}.
}

\subsection{Rate in High Probability} \label{sec:high probability bound}

Next we show that a similar \blue{\emph{non-asymptotic}}\footnote{\blue{For compactness, we use the big-O notation to hide the constants in the rate; the exact constants can be found in~\cref{sec:proof_thm_2}.}} bias-dependent convergence rate to the rate \eqref{eq:rate in expectation} also holds in high probability.
\begin{restatable}{theorem}{thmhighprob}
  \label{thm:rough bound in high probability}
  Under the same assumptions and setup of \cref{thm:rough bound in expectation},
  further assume that there is $G\in[0,\infty)$ such that, for any $\theta \in \Theta$, $\norm{\nabla \emloss_n(\theta)}_* \le G$.
  For any $\delta < 1/e$, with probability at least $1-\delta$, the following holds
  \begin{align} \label{eq:rate with high probability} \textstyle
    \frac{1}{N}\sum \exlossn - \exerr  = O\paren{\frac{C  \beta R^2}{N} +
      \sqrt{\frac{C \beta  R^2  (\embnd+ \exerr)}{N}}}
  \end{align}
  where $R_\Theta = \max_{\theta \in \Theta} \norm{\theta}, R = \max(1, R_\Theta, R_\AA), C = \log(1/\delta) \log(G R N)$.
\end{restatable}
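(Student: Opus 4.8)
The plan is to take the cumulative–loss decomposition \eqref{eq:regret bound using exerr}, so that the bias appearing linearly on the right is exactly $\exerr$ (matching the left side of \eqref{eq:rate with high probability}), and then upgrade each of its three pieces --- the online regret $\emreg$, the generalization error on the iterates, and the generalization error at the comparator $\theta^\star\in\argmin_\theta\sum\exloss_n(\theta)$ --- to a high-probability statement that does not spoil the fast rate. The workhorse is the self-bounding property of CSN functions (\cref{def:CSN}): if $f$ is convex, $\beta$-smooth and non-negative then $\norm{\nabla f(x)}_*^2 \le 2\beta f(x)$ everywhere (evaluate the smoothness upper bound at $x$ shifted by $-\tfrac1\beta$ times a dual-norm-attaining direction of $\nabla f(x)$, then use $f\ge 0$). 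Combining this with \cref{def:admissible} turns the linearized regret bound into $\emreg \le \tfrac1\eta R_\AA^2 + \eta\beta\sum\emlossn$; with the stepsize inherited from \cref{thm:rough bound in expectation}, $\tfrac1\eta R_\AA^2 = O(\beta R_\AA^2 + \sqrt{\beta N R_\AA^2\embnd})$, which is the source of the $\beta R^2/N$ and $\sqrt{\beta R^2\embnd/N}$ terms after dividing by $N$.

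Next I would control the generalization error on the iterates, $\sum_n(\exlossn - \emlossn)$. Since $\theta_n$ is measurable with respect to the first $n-1$ rounds, this is a martingale difference sum, and I would bound it with the path-wise martingale tail inequality of \citet[Theorem 3]{rakhlin2015equivalence}, which yields a deviation of order $\sqrt{V\log(1/\delta)} + b\log(1/\delta)$ in terms of the \emph{realized} path-wise second moment $V=\sum_n(\exlossn-\emlossn)^2$ and the increment range $b$. Using the uniform gradient bound $G$ and the diameter of $\Theta$ --- each per-round loss varies by at most $O(GR)$ over $\Theta$, so up to the bias budget $N\emerr\le N\embnd$ carried by $\min_\theta\emloss_n(\theta)$ the losses behave as $O(GR)$-bounded --- together with self-bounding, both $b$ and each squared increment are controlled by $O(GR)$ times the per-round losses, so $V \lesssim GR\,(\sum\exlossn + \sum\emlossn)$ and $b=O(GR)$. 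This is where the factor $\log(GRN)$ in $C$ enters, and it ties $V$ back to the very quantity $\sum\exlossn$ we are bounding.

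The hard part is the third piece, the generalization error at the comparator: $\theta^\star$ depends on the entire sample path, so $\sum_n(\emloss_n(\theta^\star)-\exloss_n(\theta^\star))$ is \emph{not} a martingale difference sum. A plain Azuma bound would require a fixed comparator and deliver only an $O(\sqrt N)$ rate, while a union bound over an $\epsilon$-net of $\Theta$ would reintroduce a dependence on $\dim\HH$; neither is admissible for the stated bound. This is the role of the new regret decomposition: I would re-route the comparison so that $\theta^\star$ is replaced by a reference that does not depend on the realized samples --- available precisely because in online IL the policy-class bias is governed by the gap between the expert and the policy class rather than by the learning trajectory, so a data-independent near-minimizer of the per-round imitation losses exists --- thereby restoring the martingale structure and letting the same path-wise concentration apply, now with path-wise second moment $\lesssim GR\cdot N(\embnd+\exerr)$. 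This contributes the remaining $\sqrt{\beta R^2\exerr/N}$-type term, after using $G=O(\beta R)$ up to bias-dependent lower-order corrections (valid for smooth non-negative losses) to match the constants as written.

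Finally I would assemble the pieces. Every bound above is self-referential --- $S:=\sum\exlossn$ sits on the left and, via $\emreg$ and via the martingale variance proxies, under a square root on the right --- so they combine into an inequality of the shape $S \le A + b\sqrt{SC} + bC$ with $A = O(\beta R^2 + N\embnd + N\exerr)$, $b = O(\beta R^2)$, and $C=\log(1/\delta)\log(GRN)$. Solving this quadratic in $\sqrt S$, dividing by $N$, and folding the $\sqrt{\embnd}$- and $\sqrt{\exerr}$-cross terms together yields \eqref{eq:rate with high probability}; the restriction $\delta<1/e$ and a union bound over the $O(\log N)$ dyadic scales implicit in the path-wise inequality are what pin down the precise form of $C$. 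The single genuine obstacle is the third step: handling the hindsight comparator, which entangles non-stationarity and stochasticity, in a way that is simultaneously dimension-free and fast --- everything else is a combination of self-bounding, the admissibility inequality, and Freedman-type concentration.
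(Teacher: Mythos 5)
Your high-level architecture---self-bounding of CSN gradients, the path-wise martingale inequality of \citet[Theorem 3]{rakhlin2015equivalence}, and a final self-referential inequality solved for the cumulative loss---matches the paper's. But the step you yourself flag as ``the single genuine obstacle'' is exactly where the proposal breaks down, and your proposed fix is an assumption, not a proof. Because you keep the function-value decomposition \eqref{eq:regret bound using exerr}, you must concentrate $\sum_n(\emloss_n(\theta^\star)-\exloss_n(\theta^\star))$ for the path-dependent comparator $\theta^\star$, and your resolution is to assert that ``a data-independent near-minimizer of the per-round imitation losses exists.'' Nothing in the hypotheses supplies such an object: $\exloss_n$ depends on $d_{\pi_{\theta_n}}$ and hence on the realized trajectory, realizability is not assumed (only a bias $\exerr$), and a sample-independent reference with comparable cumulative loss need not exist. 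The paper sidesteps this entirely by linearizing \emph{first}: convexity gives $\sum \exlossn - N\exerr \le \sum\<\exgrad,\theta_n-\theta^\star\>$, which splits into the adapted scalar MDS $\<\exgrad-\emgrad,\theta_n\>$, the linearized regret $\regret(\<\emgrad,\cdot\>)$, and $\sum\<\exgrad-\emgrad,\theta^\star\>$. The last term is bounded by $R_\Theta\,\normx{\sum(\exgrad-\emgrad)}_*$, so the object being concentrated is the \emph{vector-valued} MDS $\exgrad-\emgrad$, which does not involve $\theta^\star$ at all; the vector-valued form of \cref{lm:pathwise martingale bound} then applies with second moments controlled by \cref{lm:self bounding}. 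That is the ``new regret decomposition'' advertised in the paper, and you need it (or an equivalent device) to make your third step go through without a covering argument.

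Two secondary issues. First, your variance bound for the iterate term, $V\lesssim GR\,(\sum\exlossn+\sum\emlossn)$, places $GR$ multiplicatively in the leading term, whereas the theorem keeps $G$ only inside the logarithm in $C$; the paper instead gets $V\le 8\beta R_\Theta^2(\exreg+\emreg+N\exerr+N\emerr)$ by applying self-bounding to $\norm{\exgrad-\emgrad}_*^2$, and uses $G$ only to cap $V+W$ inside the $\log(2V+2W+1)$ factor. Your escape hatch $G=O(\beta R)$ is not clean when the per-round minima of the losses are not small. Second, after linearizing, the regret term that actually appears is $\regret(\<\emgrad,\cdot\>)$; these linear losses are not CSN, so the bias-dependent bound of \cref{lm:ol bias dependent rate} does not apply verbatim, and the paper proves a separate \cref{lm:ol gradient bias dependent rate} showing the linearized regret obeys the same bias-dependent rate. (Minor: the self-bounding constant is $4\beta$, not $2\beta$.)
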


We remark that 
the uniform bound $G$ on the norm of the gradients only appears in logarithmic terms.
Therefore, this rate stays reasonable when the loss functions have 
gradients whose norm grows with the size of $\Theta$, such as the popular squared loss.

To prove \cref{thm:rough bound in high probability},
one may attempt to build on top of the proof of \cref{thm:rough bound in expectation} by applying basic martingale concentration properties on the martingale difference sequences (MDSs) in \eqref{eq:regret bound using emerr}, or devise a similar scheme for \eqref{eq:regret bound using exerr}.
But taking this direct approach will bring back the usual rate of $O(1/\sqrt{N})$. 
To the best of our knowledge, sharp concentration inequalities for the counterparts of MDS in other learning settings cannot be adapted here in a straightforward way.
\citet[Theorem 1]{srebro2010smoothness} prove a fast rate for empirical risk minimizer (ERM) in statistical learning. However, their proof is based on local Rademacher complexities, which do not have obvious extension to non-stationary online losses.
\citet{zhang2017empirical} extend the results of \citet{srebro2010smoothness} to  stochastic convex optimization, but the extension relies on an i.i.d. concentration lemma.\footnote{The i.i.d. concentration lemma further depends on a martingale concentration bound~\citep[Theorem 3.4]{pinelis1994optimum}, which relies on an almost-surely upper bound of the second-order statistics. In comparison, the martingale concentration we utilize in this work relies on second-order statistics that are defined on the sample path~\citep[Theorem 3]{rakhlin2015equivalence}.}
\citet{kakade2009generalization} show fast converging excess risk of online convex programming algorithms when the loss function is Lipschitz and strongly convex; relaxing the strong convexity assumptions is the goal of this work.

\paragraph{Convexity Assumption}
\blue{
  Both \cref{thm:rough bound in expectation} and \cref{thm:rough bound in high probability} require that the sampled online losses $\hat l_n$ are convex.
  \emph{1)} On one hand, this convexity assumption appears to be restrictive, because our results cannot explain learning with generic neural networks.
  Nonetheless, expressive linear policy classes that meet the convexity assumption still include many useful cases (such as RKHS~\citep{hofmann2008kernel} and rich feature sets).
  Furthermore, convexity has been a central assumption in almost all online learning paradigms; we did not attempt to address this limitation in this work.
  \emph{2)} On the other hand, the convexity assumption relaxes the strong convexity assumption needed in the online IL literature~\citep{ross2011reduction,cheng2018convergence}.
  This relaxation is important when the number of samples is smaller than the number of policy parameters or when the samples are not diverse enough. In those cases, even if the expected losses $l_n$ are strongly convex, the sampled losses $\hat{l}_n$ may not be strongly convex because the Hessian matrix is singular.
}

\begin{table*}[ht]
  \caption{Comparison of different learning settings. Info.: Information about the loss function available to the learning algorithm in each round. ERM: Empirical risk minimization. Partial FB: Partial feedback.
  }
  \centering
  \begin{tabular}{llcccll}
    \toprule
    \bfseries Setting     & \bfseries Info.     & \bfseries  Stochastic & \bfseries Non-stationary & \bfseries Partial FB & \bfseries Estimator & \bfseries Excess loss to minimize                        \\
    \midrule
    Online IL (this work) & $\emloss_n$         & Yes                   & Yes                      & No                   & Online              & $\sum \exloss_n
    (\theta_n) - \min \sum \exloss_n(\theta)$                                                                                                                                                              \\
    Stochastic bandits    & $\emloss(\theta_n)$ & Yes                   & No                       & Yes                  & Online              & $\sum \exloss(\theta_n) - N \min \exloss(\theta)$        \\
    Online learning       & $\exloss_n$         & No                    & Yes                      & No                   & Online              & $\sum \exloss_n(\theta_n) - \min \sum \exloss_n(\theta)$ \\
    Statistical learning  & $\emloss$           & Yes                   & No                       & No                   & ERM                 & $\exloss(\theta_{\mathrm{ERM}}) - \min \exloss(\theta)$  \\
    Online-to-batch       & $\emloss$           & Yes                   & No                       & No                   & Online              & $\sum l(\theta_n) - N \min l(\theta)$                    \\
    \bottomrule
  \end{tabular}
  \label{table:setting comparison}
\end{table*}

\paragraph{Related Work in Learning}
Similar bias-dependent or optimistic rates have been studied extensively in several more typical learning settings such as contextual bandits~\citep{allen2018make}, statistical learning~\citep{panchenko2002some,srebro2010smoothness,zhang2017empirical,liu2018fast}, online learning with adversarial loss sequences~\citep{srebro2010smoothness,orabona2012beyond}, and
online-to-batch conversion~\citep{littlestone1990mistake, cesa2004generalization}.
\blue{
  \cref{table:setting comparison} summarizes these different learning setups.
  In contrast to bandit settings that focus on discrete actions or simplex geometry, online IL usually leads to online convex losses, a general compact convex decision set, and stochastic functional or gradient feedback.
  Compared to statistical and online learning, online IL concerns loss functions that are both stochastic and online; we can view statistical and online learning as special cases of online IL. The interactions between noises and non-stationarity make the analysis of online IL especially interesting.
}

\paragraph{Specialization to Stochastic Convex Optimization}
Because of the generality of the online IL, an online IL algorithm (\cref{alg:online imitation learning}) running on a \emph{stationary} loss function can serve as a one-pass learning algorithm for stochastic optimization; that is, we have $\exloss_n = \exloss$ for some $\exloss$ for all $n$; 
By specializing \cref{thm:rough bound in expectation} and \cref{thm:rough bound in high probability} to the stochastic optimization setting, we can recover the existing bounds in the stochastic optimization literature, i.e., Corollary 3 and Theorem 1 in \citep{srebro2010smoothness}, respectively.
These special cases can be derived in a straightforward manner due to the relationship between  $\emerr$ and $\exerr$ when the loss function is fixed (i.e., $\exloss_n = \exloss$): \emph{1)}  $\E[\epsilon] = \E[\emerr]$, and \emph{2)} in high probability, $\emerr - \exerr \le O(\sqrt{1/N})$.
However, we note that for general online IL problems, the sizes of $\emerr$ and $\exerr$ are not comparable and $\E[\epsilon] \neq \E[\emerr]$.

\subsection{Proof Sketch for Theorem 2}
We take a different decomposition of the cumulative loss to avoid the usual $O(1/\sqrt{N})$ rate originating from applying  martingale analyses on the MDSs in \eqref{eq:regret bound using emerr} and \eqref{eq:regret bound using exerr}.
%
Here we construct two \emph{new} MDSs in terms of the gradients: recall $\exerr = \min_{\theta \in \Theta} \sum \exlossx$ 
and let $\theta^\star = \argmin_{\theta \in \Theta} \sum \exlossx$. Then by convexity of $\exloss_n$, we can derive
\begin{align} \label{eq:proof first}
   & \quad \sum \exlossn - N \exerr \nonumber                                         \\
   & \le  \sum \underbrace{\langle \exgrad - \emgrad,  \theta_n \rangle}_{\text{MDS}}
  - 
  \\
   & \;\;\; \;\;
  \sum \langle \underbrace{\exgrad - \emgrad}_{\text{MDS}}, \theta^\star \rangle
  + 
  \regret(\<\nabla \emloss_n(\theta_n), \cdot\>) 
  \nonumber
\end{align}

Our proof is based on analyzing these three terms.
For the MDSs in \eqref{eq:proof first}, we notice that, for smooth and non-negative functions, the squared norm of the gradients can be bounded by its function value.
\begin{restatable}[Lemma 3.1~\citep{srebro2010smoothness}]{lemma}{selfbounding}
  \label{lm:self bounding}
  Suppose a function $f: \HH \to \R$ is $\beta$-smooth and non-negative, then for any $x \in \HH$, $\norm{\nabla f(x)}_*^2 \le 4 \beta f(x)$.
\end{restatable}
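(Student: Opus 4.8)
The plan is to prove \cref{lm:self bounding} directly from the two hypotheses---$\beta$-smoothness and non-negativity of $f$---using a standard first-order argument. Fix $x \in \HH$ and consider the point obtained by taking a gradient step, namely $y = x - \frac{1}{\beta} \nabla f(x)$, where I identify $\nabla f(x)$ with an element of $\HH$ via the duality so that this is well-defined (more carefully, I take $y$ to be $x$ minus the direction achieving $\norm{\nabla f(x)}_*$ scaled appropriately; the clean way is to work with the descent lemma below). The key tool is the \emph{descent lemma} implied by $\beta$-smoothness: for any $x, y \in \HH$,
\begin{align*}
  f(y) \le f(x) + \<\nabla f(x), y - x\> + \frac{\beta}{2}\norm{y-x}^2.
\end{align*}
This inequality follows from $\beta$-smoothness by integrating $\nabla f$ along the segment $[x,y]$ and applying Cauchy--Schwarz together with the dual-norm bound $\norm{\nabla f(x) - \nabla f(x + t(y-x))}_* \le \beta t \norm{y-x}$.

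The main step is then to choose $y$ so as to minimize the right-hand side of the descent lemma. Treating the right-hand side as a function of $y$, its infimum over $\HH$ is $f(x) - \frac{1}{2\beta}\norm{\nabla f(x)}_*^2$: indeed, minimizing $\<\nabla f(x), z\> + \frac{\beta}{2}\norm{z}^2$ over $z = y - x \in \HH$ gives this value by the definition of the dual norm (take $z$ in the direction that attains $\norm{\nabla f(x)}_*$ with magnitude $\norm{\nabla f(x)}_*/\beta$). Combining with non-negativity, $0 \le f(y)$, we obtain
\begin{align*}
  0 \le \inf_{y \in \HH} \left( f(x) + \<\nabla f(x), y-x\> + \frac{\beta}{2}\norm{y-x}^2 \right) = f(x) - \frac{1}{2\beta}\norm{\nabla f(x)}_*^2,
\end{align*}
which rearranges to $\norm{\nabla f(x)}_*^2 \le 2\beta f(x)$. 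This is actually a factor of $2$ sharper than the stated bound $\norm{\nabla f(x)}_*^2 \le 4\beta f(x)$, so the claimed inequality follows a fortiori; the looser constant presumably just matches the form used downstream.

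I expect the only real subtlety---not an obstacle so much as a point requiring care---to be the interplay between the norm $\norm{\cdot}$ on $\HH$ (which the paper emphasizes need not be the inner-product norm) and its dual $\norm{\cdot}_*$, and in particular whether the infimizing $y$ actually lies in $\HH$ (it does, since $\HH$ is a vector space and $y-x$ ranges over all of $\HH$) and whether the dual norm is attained. In an infinite-dimensional Hilbert space with a general equivalent norm the dual norm may be attained by a limiting sequence rather than a single vector; this is handled by taking an infimum / limiting argument rather than an exact minimizer, which does not affect the final inequality. Everything else is routine: the descent lemma is textbook, and the optimization is a one-line completion-of-squares in the general-norm sense. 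I would write the proof in three lines: state the descent lemma, plug in the minimizing displacement, invoke $f(y)\ge 0$, and rearrange.
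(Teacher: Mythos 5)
Your proof is correct and follows essentially the same route as the paper's: both exploit a smoothness-induced quadratic upper bound together with non-negativity and then optimize the displacement, the only difference being that the paper reduces to the one-variable function $g(u)=f(x+u(y-x))$ for each unit direction and uses a mean-value-theorem bound with constant $\beta(v-u)^2$ (yielding $4\beta f(x)$), whereas you invoke the integral-form descent lemma with the sharper $\frac{\beta}{2}\norm{y-x}^2$ term and so obtain the stronger conclusion $\norm{\nabla f(x)}_*^2\le 2\beta f(x)$. Your handling of the general (non-inner-product) norm via an infimum over displacements, rather than an exact minimizer, correctly addresses the only subtle point.
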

\cref{lm:self bounding} enables us to properly control the second-order statistics of the MDSs in \eqref{eq:proof first}. 
By a recent vector-valued martingale concentration inequality that depends only on second-order statistics~\citep[Theorem 3]{rakhlin2015equivalence}, we obtain a self-bounding property for \eqref{eq:proof first} to get fast concentration rate.

Besides analyzing the MDSs, we need to bound the regret to the linear functions defined by the gradients (the last term in \eqref{eq:proof first}).
Since this last term is linear, not CSN, the bias-dependent online regret in the proof of \cref{thm:rough bound in expectation} does not apply.
Nonetheless, because these linear functions are based on the gradients of CSN functions,
we discover that their regret rate actually obeys the exact same rate as the regret to the CSN loss functions.
This is notable because the regret to these linear functions upper bounds the regret to the CSN loss functions.

Combining the bounds on the MDSs and the regret, we obtain the rate in \eqref{eq:rate with high probability}.

\section{CASE STUDIES} \label{sec:case study}
We use two concrete instantiations of the online IL algorithm (\cref{alg:online imitation learning}) to show how the new theoretical results in \cref{sec:main results} improve the existing understanding of the policy improvement speed in these algorithms.

\subsection{Imitation Learning} \label{sec:online imitation learning}

The seminal work on online IL~\citep{ross2011reduction}
has demonstrated successes in solving many real-world sequential decision making problems~\citep{laskey2016robot,laskey2017dart,pan2018agile}.
When the action space is discrete, a popular design choice is to set $D_{\expert}(s,a)$ in \eqref{eq:IL online loss} as the hinge loss~\citep{ross2011reduction}. 
For continuous domains, $\ell_1$-loss becomes a natural alternative for defining $D_{\expert}(s,a)$, which, e.g., is adopted by~\citet{pan2018agile} for autonomous driving.
When the policy is linear in the parameters, one can verify that these loss functions are convex and non-negative, though not strongly convex.
Therefore, existing theorems suggest only an $O(1/\sqrt{N})$ rate, which does not reflect the fast experimental rates~\citep{ross2011reduction,pan2018agile}.

Although our new theorems are not directly applicable to these non-smooth loss functions, they can be applied to a smoothed version of these non-negative convex loss functions.
For instance, applying the Huber approximation (an instantiation of Nesterov's smoothing)~\citep{nesterov2005smooth} to ``smooth the tip'' of these $\ell_1$-like losses yields a globally smooth function with respect to the $\ell_2$-norm.
As the smoothing mainly changes where the loss is close to zero, our new theorems suggest that, when the policy class is expressive enough, learning with these $\ell_1$-like losses would converge in a $\tilde O(1/N)$ rate before the policy gets very close to the expert policy during policy optimization.

\subsection{Interactive System ID for Model-based RL}

Interactive system identification (ID) is a technique that interleaves data collection and dynamics model learning for robust model-based RL.
\citet{ross2012agnostic} show that interactive system ID can be analyzed under the online IL framework, where the regret guarantee implies learning a dynamics model that mitigates the train-test distribution shift problem~\citep{abbeel2005exploration,ross2012agnostic}.
%
Let $T$ and $T_\theta$ denote the true and the learned transition dynamics, respectively. A common online loss for interactive system ID is
$  \exlossx = \E_{(s,a) \sim \frac{1}{2}{d_{T_{\theta_{n}}}} + \frac{1}{2}\nu } \bracket{D_{s,a}(T_\theta|| T)}$,
where $D_{s,a}(T_\theta|| T)$ is some distance between $T$ and $T_\theta$ under state $s$ and action $a$, $\nu$ is the state-action distribution of an exploration policy, and $d_{T_{\theta_{n}}}$ is the state-action distribution induced by running an optimal policy with respect to the  model $T_{\theta_n}$.
When the model class is expressive enough to contain the $T$, it holds $\exlossx=0$ for some $\theta\in\Theta$ (cf. \eqref{eq:realizable assumption}).

Suppose that the states and actions are continuous.
A common choice for $D_{s,a}(T_\theta|| T)$ in learning deterministic dynamics is the squared error $D_{s,a}(T_\theta|| T)= \norm{T_\theta(s,a)- s'}_2^2$~\citep{ross2012agnostic}, where the $s'$ is the next state in the true transition of $T$.
If $T_\theta$ is linear in $\theta$
or belongs to a reproducing kernel Hilbert space, the sampled loss function $\emloss_n$ is CSN.
Alternatively, when learning a probabilistic model,
$D_{s,a}$ can be selected as the KL-divergence~\citep{ross2012agnostic}; 
it is known that if $T_\theta$ belongs to the exponential family of distributions, 
the KL divergence, and hence $\emloss_n$, are smooth and convex~\citep{wainwright2008graphical}. If the sample size is large enough, $\emloss_n$ becomes non-negative in high probability.

As these online losses are CSN,
our theoretical results apply and suggest a convergence rate in $\tilde{O}(1/N)$. On the contrary, the finite sample analysis conducted in~\citep{ross2012agnostic} uses the standard online-to-batch techniques~\citep{cesa2004generalization} and can only give a rate of $O(1/\sqrt{N})$. 
Our new results provide a better explanation to justify the fast policy improvement speed observed empirically, e.g.,~\citep[Figure 2]{ross2012agnostic}.

\section{EXPERIMENTAL RESULTS} \label{sec:exps}

Although the main focus of this paper is the new theoretical insights, we conduct experiments to provide evidence that the fast policy improvement phenomenon indeed exists, as our theory predicts. We verify the change of the policy improvement rate due to policy class capacity by running an imitation learning experiment in a simulated CartPole balancing task. Details can be found in~\cref{app:exp}.

\paragraph{MDP setup}
The goal of the  CartPole task is to keep the pole upright by controlling the acceleration of the cart. The start state is a configuration with a small uniformly sampled offset from being static and vertical, and the dynamics is deterministic.
In each time step, if the pole is maintained within a threshold from being upright, the learner receives an instantaneous reward of one; otherwise, the learner receives zero rewards and the episode terminates.
This MDP has a 4-dimensional continuous state space and a 1-dimensional continuous action space.

\paragraph{Expert and learner policies}
We use a neural network expert policy (with one hidden layer of $64$ units and $\mathrm{tanh}$ activation)
which is trained using policy gradient with GAE~\citep{schulman2015high} and ADAM~\citep{kingma2014adam}.
We let the learner policy be another neural network that shares the same architecture with the expert policy.\blue{
  When learning only the output layer, we copy the weights of the hidden layer from the expert policy and randomly initialize the weights of the output layer;  we can view the learner as a \emph{linear} policy using the representation of the expert policy.
  When learning the full network, we randomly initialize all the weights and biases.
}

\paragraph{Online IL setup}
We emulate online IL with unbiased and biased policy classes.
\blue{
  To define policy classes with different degrees of bias, we impose $\ell_2$-norm constraints of different sizes on the weights of the learner's output layer.
  To define the unbiased policy class, we lift this $\ell_2$-norm constraint.
}
We select $\exloss_n(\theta) = \E_{s \sim d_{\pi_{\theta_n}}} [H_\mu(\pi_\theta(s) - \expert(s))]$ as the online loss in IL (see \cref{sec:reducing policy optimization}), where $H_\mu$ is the Huber function 
defined as
$H_\mu(x) = \frac{1}{2} x^2$ for $\abs{x} \le \mu$ and $
  \mu \abs{x} - \frac{1}{2}\mu^2$ for $\abs{x} > \mu$.
In the experiments,
$\mu$ is set to $0.05$; as a result, $H_\mu$ is linear when its function value is larger than $0.00125$.
\blue{In the setting of training only the output layer}, because the learner's policy is linear, this online loss is CSN (\cref{def:CSN}) in the unknown weights of the learner.
\blue{We use AdaGrad~\citep{mcmahan2010adaptive,duchi2011adaptive} to optimize the learner policy with constant stepsize $0.01$ and $500$ ($\ln 500 \approx 6.2$) iterations.}

\paragraph{Simulation results}

We compare the results in the unbiased and the biased settings, in terms of how the average loss $\frac{1}{N}\sum_{n=1}^N \exlossn$ changes as the number of rounds $N$ in online learning increases.
\blue{
  We impose $\ell_2$-norm constraints of sizes $\{0.1, 0.12, 0.15\}$ on the weights of the output layer to simulate biased policy classes. For comparison, when training the output layer, the $\ell_2$-norm of the final policy trained without the constraint is about $0.18$; when training the full network, it is about $0.23$.
}
\blue{
  The experimental results are depicted in \cref{fig:exp}.
  To better visualize the rate of improvement, we plot both the $x$- and $y$-axis in log scale, so that the slope of the curves directly represents the rate: if the slope is $-1$, the rate is $O(1/N)$ and if the slope is $-\frac{1}{2}$ the rate is $O(1/\sqrt{N})$.
  In~\cref{fig:last_layer}, only the output layer of the learner policy is trained.
  In this setting, all the assumptions made in our theorems are satisfied.
  It can be seen that when using a larger norm constraint (i.e., smaller bias), the learner policy improvement becomes faster, moving towards $O(1/N)$. 
  The curve with the constraint of $0.10$ in~\cref{fig:last_layer} gets a rate slightly faster $O(1/\sqrt{N})$, likely because the Huber loss is strongly convex near zero.
  Interestingly,
  \cref{fig:full_nn} shows that this phenomenon happens also in training the full network, which does not meet the assumption required in the theory.}

\begin{figure}[t]
  \centering
  \begin{subfigure}[b]{0.47\linewidth}
    \includegraphics[width=.98\linewidth]{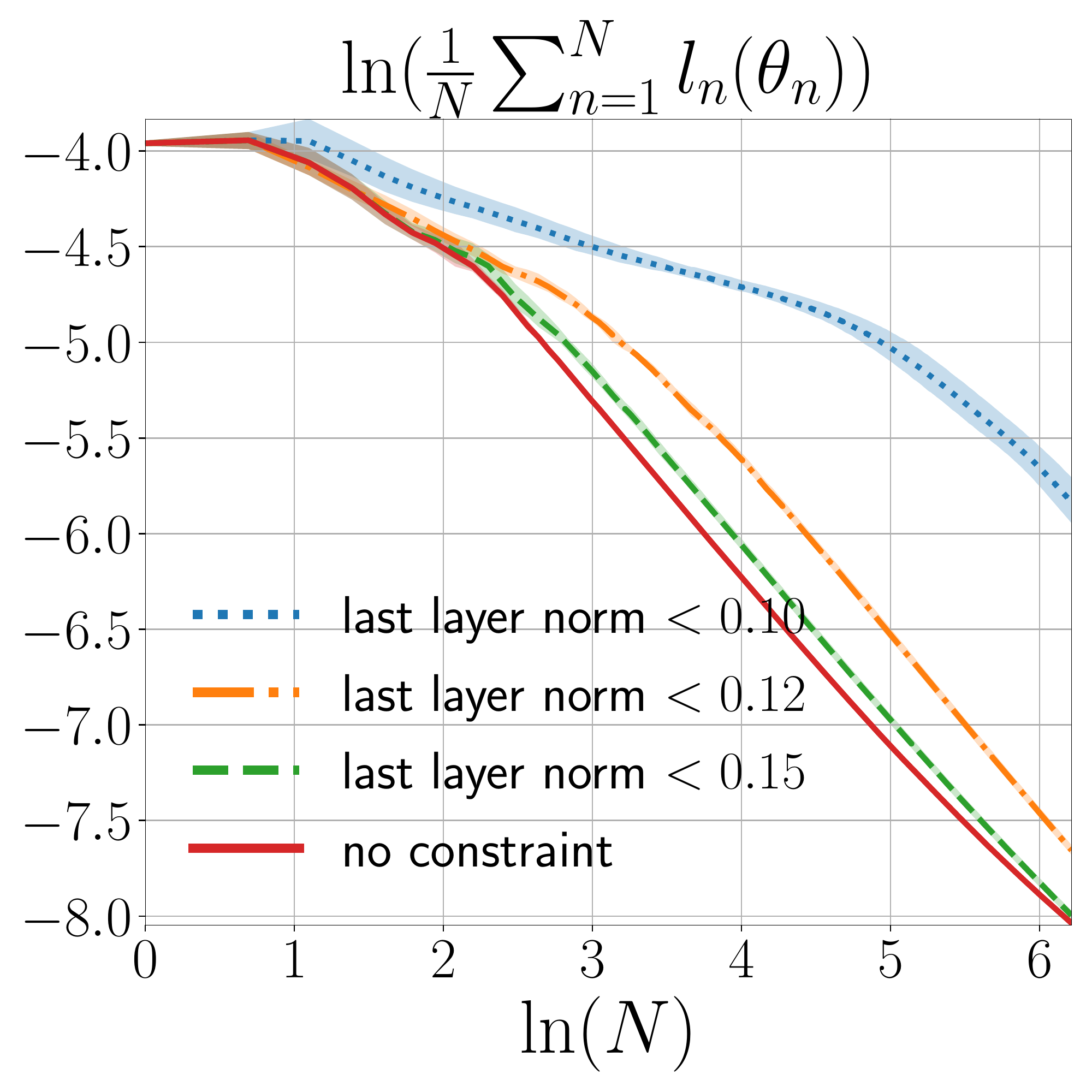}
    \caption{learning output layer}
    \label{fig:last_layer}
  \end{subfigure}
  \begin{subfigure}[b]{0.47\linewidth}
    \includegraphics[width=.98\linewidth]{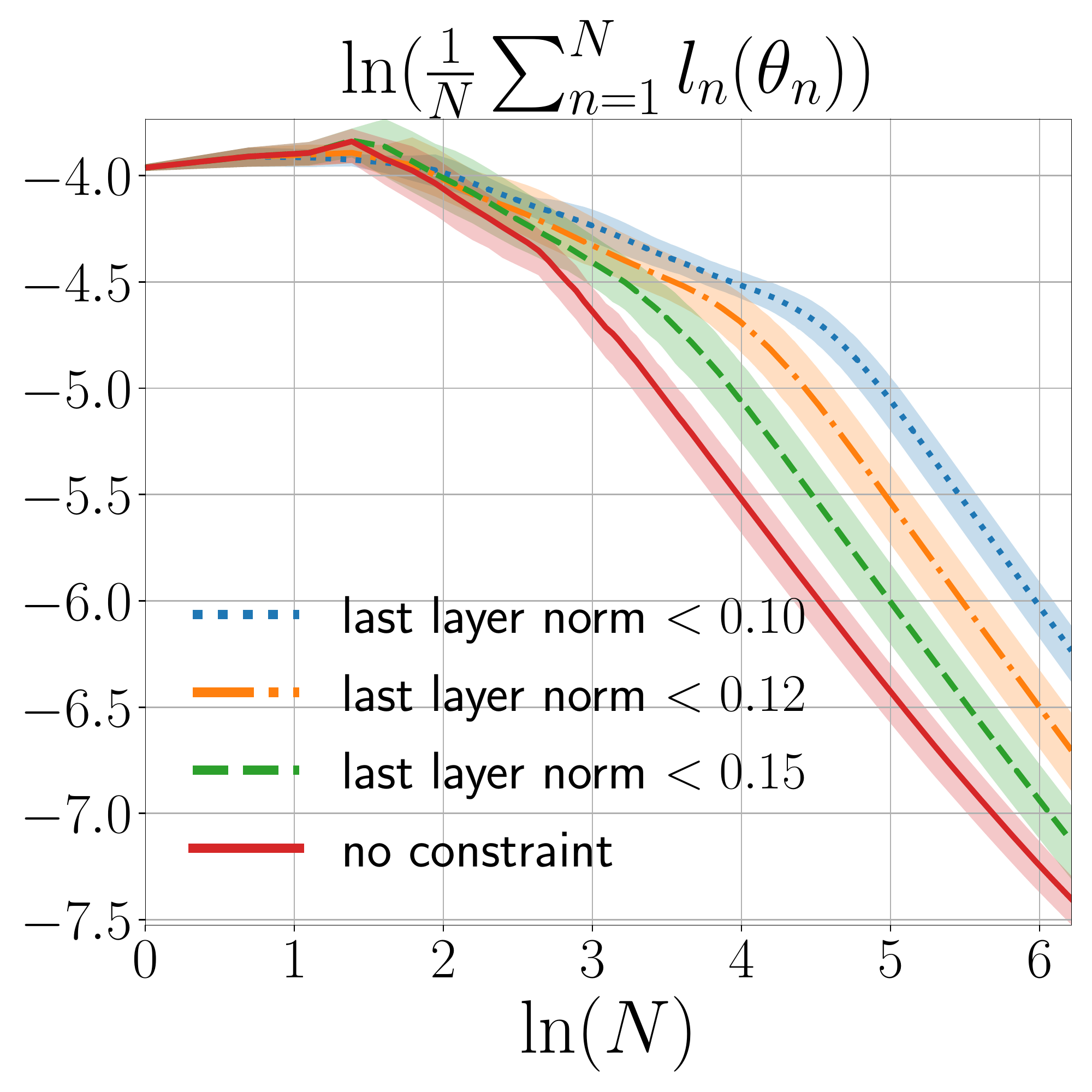}
    \caption{learning full network}
    \label{fig:full_nn}
  \end{subfigure}
  \caption{
    The convergence rate of online IL with different policy class biases, where the bias is defined as the $\ell_2$-norm constraint on the weights of the output layer.
    The curves are plotted using the median over 4 random seeds, and the shaded region represents $10\%$ and $90\%$ percentile.}
  \label{fig:exp}

\end{figure}

\section{CONCLUSION}

In this paper, we provide an explanation of the fast learning speed of online IL by proving new expected and high-probability convergence rates that depend on the policy class capacity.
However, our current results do not explain all the fast improvements of online IL observed in practice.
The analyses here are based on the assumption of using convex and smooth loss functions. This assumption would be violated, for example, with a deep neural network policy based on with ReLU activation; yet~\citet{pan2018agile} show fast empirical convergence rates of these networks in online IL. 
Nonetheless, we envision that the insights from this paper can provide a promising direction to better understanding the behaviors of online IL, and to suggest ways for designing new online IL algorithms that proactively leverage these self-bounding regret properties to achieve faster learning.

\bibliography{uai2021-template}
\clearpage
\onecolumn
\appendix
\appendixpage

\section{PROOF OF TOOL LEMMAS}
\subsection{Proof of Lemma 1}
For completeness, we provide the proof for the basic inequality that upper bounds the norm of gradients by the function values, for smooth and nonnegative functions.
This is essential for obtaining the self-bounding properties for proving \cref{lm:ol bias dependent rate} and \cref{thm:rough bound in high probability} later on.
\selfbounding*
\begin{proof}
	Fix any $x \in \HH$. And fix any $y \in \HH$ satisfying $\norm{y-x} \le 1$.
	Let $g(u) = f(x + u(y-x))$ for any $u \in \R$. Fix any $u, v \in \R$,
	\begin{align*}
		|g'(v) - g'(u)| & = |\< \nabla f(x+v(y-x)) - \nabla f(x+u(y-x)), y - x\>|       \\
		                & \le \|\nabla f(x+v(y-x)) - \nabla f(x+u(y-x)) \|_* \| y - x\| \\
		                & \le \beta |v-u|\|y - x\|^2                                    \\
		                & \le \beta |v-u|
	\end{align*}
	Hence, $g$ is $\beta$-smooth.
	By the mean-value theorem, for any $u, v \in \R$, there exists $w \in (u, v)$, such that $g(v) = g(u) + g'(w) (v-u)$. Hence
	\begin{align*}
		0 & \le g(v) = g(u) + g'(u)(v-u) + (g'(w) - g'(u))(v-u)                             \\
		  & \le g(u) + g'(u)(v-u) + \beta |w-u| |v-u| \le g(u) + g'(u)(v-u) + \beta (v-u)^2
	\end{align*}
	Setting $v = u - \frac{g'(u)}{2\beta}$ yields that $|g'(u)| \le \sqrt{4\beta g(u)}$. Therefore, we have
	\begin{align*}
		|g'(0)| = |\< \nabla f(x), y - x\>| \le  \sqrt{4\beta g(0)} = \sqrt{4 \beta f(x)}
	\end{align*}
	Therefore, by the definition of dual-norm,
	\begin{align*}
		\norm{\nabla f(x)}_* = \sup_{y\in \BB, \norm{y-x} \le 1} \< \nabla f(x), y-x \> = \sup_{y\in \BB, \norm{y-x} \le 1} |\< \nabla f(x), y-x \> |\le \sqrt{4 \beta f(x)}
	\end{align*}
	where the second equality is due to the domain of $y-x$.
\end{proof}
It's worthy to note that $f$ needs to be smooth and non-negative on the entire Hilbert space $\HH$.

\section{PROOF OF THEOREM 1} \label{sec:proof_thm_1}
\thmexpectation*

The rate \eqref{eq:rate in expectation} follows from analyzing the regret and the generalization error in the decomposition  in \eqref{eq:regret bound using emerr}.
First, under the assumption of CSN loss functions and admissible online algorithms, the online regret can be bounded by an extension of the bias-dependent regret that is stated for mirror descent in~\citep[Theorem 2]{srebro2010smoothness}, whose average gives the rate in \eqref{eq:rate in expectation} (see \cref{app:online regret}).
Second, the generalization error in \eqref{eq:regret bound using emerr} vanishes in expectation because it is a martingale difference sequence (see \cref{app:vanish}).

\subsection{Upper Bound of Online Regret} \label{app:online regret}
We show a bias-dependent regret of admissible online algorithms (\cref{def:admissible}) with CSN functions (\cref{def:CSN}) by extending Theorem 2 of \citep{srebro2010smoothness} as follows.
\begin{lemma} \label{lm:ol bias dependent rate}
	Consider running an admissible online algorithm $\AA$ on a sequence of CSN loss functions $\{f_n\}$.
	Let $\{\theta_n\}$ denote the online decisions made in each round, and let $\emerr = \frac{1}{N}\min_{\theta \in \Theta} \sum f_n(\theta)$ be the bias, and let $\embnd$ be such that $\embnd\geq \emerr$ almost surely.
	Choose $\eta$ for $\AA$ to be
	$\frac{1}{2\paren{\beta + \sqrt{\beta^2 +
					\frac{\beta N \embnd}{2 R_\AA^2}}}}$.
	Then the following holds
	\begin{align*}
		\regret(f_n) \le 8 \beta R_\AA^2 +  \sqrt{8\beta R_\AA^2 N \embnd}.
	\end{align*}
\end{lemma}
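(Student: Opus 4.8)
The plan is to run the standard ``optimistic'' regret argument: combine the admissibility bound (\cref{def:admissible}), which already controls the regret by a multiple of $\sum \norm{\nabla f_n(\theta_n)}_*^2$, with the self-bounding inequality \cref{lm:self bounding}, which bounds this quantity by $4\beta \sum f_n(\theta_n)$. Since $\sum f_n(\theta_n)$ in turn contains $\regret(f_n)$ itself, this produces a self-referential inequality in the regret that can be solved in closed form, and the prescribed stepsize is precisely the value that balances the two terms of the resulting bound.

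Concretely, first I would invoke \cref{def:admissible}: as each $f_n$ is differentiable and convex (being CSN), admissibility gives $\regret(f_n) \le \regret(\<\nabla f_n(\theta_n),\cdot\>) \le \tfrac{1}{\eta}R_\AA^2 + \tfrac{\eta}{2}\sum \norm{\nabla f_n(\theta_n)}_*^2$. Next, since each $f_n$ is $\beta$-smooth and non-negative, \cref{lm:self bounding} applied at $\theta_n$ yields $\norm{\nabla f_n(\theta_n)}_*^2 \le 4\beta f_n(\theta_n)$, hence $\regret(f_n) \le \tfrac{1}{\eta}R_\AA^2 + 2\eta\beta \sum f_n(\theta_n)$. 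Then, by the definition of the regret and of $\emerr$, $\sum f_n(\theta_n) = \regret(f_n) + \min_{\theta\in\Theta}\sum f_n(\theta) = \regret(f_n) + N\emerr \le \regret(f_n) + N\embnd$. Writing $\rho \coloneqq \regret(f_n)$ and rearranging gives $(1-2\eta\beta)\,\rho \le \tfrac{1}{\eta}R_\AA^2 + 2\eta\beta N\embnd$.

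To finish, I would substitute the prescribed stepsize. Put $s \coloneqq \sqrt{\beta^2 + \tfrac{\beta N\embnd}{2R_\AA^2}}$, so $\eta = \tfrac{1}{2(\beta+s)}$ and $\tfrac{1}{\eta} = 2(\beta+s)$. Since $s \ge \beta$, we have $2\eta\beta = \tfrac{\beta}{\beta+s} \le \tfrac12 < 1$, so $1-2\eta\beta = \tfrac{s}{\beta+s} > 0$ and we may divide. Using the identity $\beta N\embnd = 2R_\AA^2(s^2-\beta^2) = 2R_\AA^2(s-\beta)(s+\beta)$, the right-hand side collapses to $2R_\AA^2(\beta+s) + 2R_\AA^2(s-\beta) = 4R_\AA^2 s$, giving $\rho \le \tfrac{\beta+s}{s}\cdot 4R_\AA^2 s = 4R_\AA^2(\beta+s)$. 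Finally, subadditivity of the square root gives $s \le \beta + \sqrt{\tfrac{\beta N\embnd}{2R_\AA^2}}$, so $\rho \le 8\beta R_\AA^2 + 4R_\AA^2\sqrt{\tfrac{\beta N\embnd}{2R_\AA^2}} = 8\beta R_\AA^2 + \sqrt{8\beta R_\AA^2 N\embnd}$, which is the claim.

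I expect the only delicate part to be the self-referential inequality: one must check that the prescribed $\eta$ keeps the coefficient $1-2\eta\beta$ strictly positive (so that the inequality can be solved for $\rho$), and then keep the exact factor $\tfrac{\beta+s}{s}$ rather than the crude bound $1-2\eta\beta \ge \tfrac12$, since the latter would lose the tight constants. Two degenerate cases deserve a line each: if $R_\AA = 0$ the stepsize formula is ill-defined, but admissibility with $\eta \to 0^{+}$ forces $\rho \le 0$ and the bound holds trivially; and if $\embnd = 0$ then $\eta = \tfrac{1}{4\beta}$ and the chain above reduces cleanly to $\rho \le 8\beta R_\AA^2$.
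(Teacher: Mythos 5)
Your proposal is correct and follows essentially the same route as the paper's proof: admissibility plus the self-bounding \cref{lm:self bounding} to get a self-referential inequality in the regret, which is then solved with the prescribed stepsize (the paper writes this in terms of $\lambda = \tfrac{1}{2\eta}$ and derives the stepsize by optimizing the resulting bound, whereas you substitute the given $\eta$ directly and verify the algebra, arriving at the same intermediate bound $4R_\AA^2(\beta+s) = 2\lambda r^2$). The constants and the final inequality match exactly.
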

\begin{proof}
	Because the online algorithm $\AA$ is admissible, we have
	\begin{align}
		\regret(f_n)  \le \frac{1}{\eta} R_\AA^2 + \frac{\eta}{2}\sum \norm{\gn}_*^2
	\end{align}
	Let $\lambda = \frac{1}{2\eta}$ and	$r^2 = 2R_\AA^2$, then
	\begin{align}
		\frac{1}{\eta} R_\AA^2 + \frac{\eta}{2}\sum \norm{\gn}_*^2
		= \lambda r^2 + \sum \frac{1}{4\lambda}\norm{\gn}_*^2
	\end{align}
	Using \cref{lm:self bounding} yields a \emph{self-bounding property} for $\regret(f_n)$:
	\begin{align} \label{eq:concrete self-bounding}
		\regret(f_n) \le \lambda r^2 +  \frac{\beta}{\lambda} \sum f_n(\theta_n) \le \lambda r^2 + \frac{\beta}{\lambda} \regret(f_n) + \frac{\beta}{\lambda} N \embnd
	\end{align}
	By rearranging the terms, we have a bias-dependent upper bound
	\begin{align} \label{eq:proof thm1 bound}
		\regret(f_n) \le \frac{\beta}{\lambda - \beta} N\embnd + \frac{\lambda^2}{\lambda - \beta} r^2
	\end{align}
	The upper bound can be minimized by choosing an optimal $\lambda$. Setting the derivative of the right-hand side to zero, and computing the optimal $\lambda$ ($\lambda > 0$) gives us
	\begin{align} \label{eq:lambda}
		r^2 \lambda^2 - 2 \beta r^2 \lambda - \beta N \embnd = 0, \quad \lambda > 0 \quad \text{ and } \quad \lambda = \beta + \sqrt{\beta^2 + \frac{\beta N \embnd}{r^2}}
	\end{align}
	which implies that the optimal $\eta$ is $\frac{1}{2\paren{\beta + \sqrt{\beta^2 + \frac{\beta N \embnd}{2 R_\AA^2}}}}$.
	Since the optimal $\lambda$ satisfies $\beta N \embnd = r^2 \lambda^2 - 2 \beta r^2 \lambda$ implied from \eqref{eq:lambda}, \eqref{eq:proof thm1 bound} can be simplified into:
	\begin{align}
		\regret(f_n) & \le \frac{1}{\lambda-\beta} \beta N \embnd + \frac{\lambda^2}{\lambda - \beta} r^2 =
		\frac{1}{\lambda - \beta}(r^2 \lambda^2 - 2 \beta r^2 \lambda) +\frac{\lambda^2}{\lambda - \beta} r^2 \nonumber \\
		             & =\frac{2\lambda^2 r^2 - 2\beta \lambda r^2}{\lambda - \beta}
		= 2 \lambda r^2
	\end{align}
	Plugging in the optimal $\lambda$ yields
	\begin{align} \label{eq:ol result}
		\regret(f_n) & \le 2 \lambda r^2
		= 2 \paren{\beta + \sqrt{\beta^2 + \frac{\beta N\embnd }{r^2}}} r^2\nonumber             \\
		             & = 2 \beta r^2 + \sqrt{2\beta r^2}  \sqrt{2\beta r^2 + 2N\embnd} \nonumber \\
		             & \le 4 \beta r^2 + 2 \sqrt{\beta r^2 N \embnd} \nonumber                   \\
		             & = 8 \beta R_\AA^2 +  \sqrt{8\beta R_\AA^2 N \embnd}
	\end{align}
	where the last inequality uses the basic inequality: $\sqrt{a+b}\leq\sqrt{a}+\sqrt{b}$.
\end{proof}
Notably, the admissibility defined in \cref{def:admissible} is satisfied by common online algorithms, such as mirror descent \citep{nemirovski2009robust} and Follow-The-Regularized-Leader \citep{mcmahan2017survey} under first-order or full-information feedback, where $\eta$ in \cref{def:admissible} corresponds to a constant stepsize, and $R_\AA$ measures the size of the decision set $\Theta$.
More concretely, assume that the loss functions $\{f_n\}$ are convex.
Then for mirror descent,  with constant stepsize $\eta$, i.e., $\theta_{n+1} = \argmin_{\theta \in \Theta} f_n(\theta) +  \frac{1}{\eta}D_h(\theta||\theta_n)$, where $h$ is 1-strongly convex and $D_h$ is the Bregman distance generated by $h$ defined by $D_h(x||y) = h(x) - h(y) - \<\nabla h(y), x-y\>$ \citep{teboulle2018simplified},
$R_\AA^2$ can be set to $\max_{x, y\in \Theta} D_h(x||y)$.
And for FTRL with constant stepsize $\eta$, i.e., $\theta_{n+1} = \argmin_{\theta \in \Theta} \sum f_n(\theta) + \frac{1}{\eta} h(\theta)$, where $h$ is 1-strongly convex and non-negative, $R_\AA^2$ can be set to $\max_{\theta \in \Theta} h(\theta)$ \citep[Theorem 1]{mcmahan2017survey}.

\subsection{The Generalization Error Vanishes in Expectation} \label{app:vanish}
The generalization error in \eqref{eq:regret bound using emerr} vanishes in expectation because it is a martingale difference sequence.
\begin{lemma} \label{lm:vanish}
	For \cref{alg:online imitation learning}, the following holds:
	$\E [\sum \exlossn -\sum \emlossn]=0$.
\end{lemma}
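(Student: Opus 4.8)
The plan is to recognize this as a direct consequence of the tower property of conditional expectation applied to the filtration generated by the sequence of policies and samples. First I would set up the natural filtration $\{\FF_n\}$ where $\FF_{n}$ captures all randomness up to and including the construction of $\theta_n$ (equivalently, the policy $\pi_{\theta_n}$), but \emph{before} the fresh samples used to build $\emloss_n$ are drawn in round $n$. The key observation is that $\theta_n$ is $\FF_n$-measurable, and the online loss $\exloss_n(\theta) = \E_{s \sim d_{\pi_{\theta_n}}}\E_{a\sim\pi_\theta}[D_{\expert}(s,a)]$ is a deterministic function of $\theta_n$, hence also $\FF_n$-measurable as a function on $\Theta$. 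By the defining property of the estimator in \cref{alg:online imitation learning} (line requiring $\E[\emloss_n(\theta)] = \exloss_n(\theta)$ for all $\theta$), and since the sampling in round $n$ is conditionally unbiased given $\FF_n$, we get $\E[\emloss_n(\theta_n) \mid \FF_n] = \exloss_n(\theta_n)$.

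Next I would make the martingale-difference structure explicit: define $X_n \coloneqq \exloss_n(\theta_n) - \emloss_n(\theta_n)$. Then $\E[X_n \mid \FF_n] = \exloss_n(\theta_n) - \E[\emloss_n(\theta_n)\mid\FF_n] = 0$, so $\{X_n\}$ is a martingale difference sequence with respect to $\{\FF_n\}$. Summing and applying linearity of expectation together with the tower property,
\begin{align*}
  \E\left[\sum_{n=1}^N \exlossn - \sum_{n=1}^N \emlossn\right]
  = \sum_{n=1}^N \E[X_n]
  = \sum_{n=1}^N \E\big[\,\E[X_n \mid \FF_n]\,\big]
  = 0.
\end{align*}
This gives the claim.

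The only genuinely delicate point — the step I would treat as the main obstacle — is making sure the conditioning is set up so that $\theta_n$ is independent of (or at least measurable before) the randomness in $\emloss_n$ that matters. This is exactly what the algorithmic protocol guarantees: $\pi_{\theta_n}$ is fixed at the start of round $n$, the loss $\exloss_n$ is \emph{designed} from $\pi_{\theta_n}$, and only then are samples gathered by executing $\pi_{\theta_n}$ to form $\emloss_n$; the update to $\theta_{n+1}$ happens afterward. So $\E[\emloss_n(\theta_n)\mid\FF_n] = \exloss_n(\theta_n)$ follows from the stated unbiasedness condition $\E[\emloss_n(\theta)] = \exloss_n(\theta)$ applied pointwise and the fact that the conditional distribution of the round-$n$ samples given $\FF_n$ is precisely the sampling distribution under $\pi_{\theta_n}$. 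I would state this measurability bookkeeping carefully but not belabor it, since the remaining computation is immediate.
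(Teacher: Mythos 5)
Your proposal is correct and is essentially the same argument as the paper's: the paper's proof is a backward induction that repeatedly replaces $\exloss_n(\theta_n)$ by $\E_{\emloss_n \mid \emrng{1}{n-1}}[\emloss_n(\theta_n)]$, which is exactly the tower-property/martingale-difference-sequence computation you describe (with the filtration generated by $\emrng{1}{n-1}$, which determines $\theta_n$). The measurability bookkeeping you flag is handled identically in the paper via the ordering of steps in \cref{alg:online imitation learning}.
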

\begin{proof}
	We show this by working from the end of the sequence. For brevity, we use the symbol colon in the subscript to represent a set that includes the start and the end indices, e.g. $\emloss_{1:N-2}$ stands for $\{\emloss_1, \dots, \emloss_{N-2}\}$.
	\begin{align*}
		\E_\emrng{1}{N} \left[\sum_{t=1}^N \exlossn \right]
		 & =  \E_\emrng{1}{N-1} \bracket{\sum_{t=1}^{N-1} \exlossn + \exloss_N(\theta_N)}                                                                                                                      \\
		 & = \E_\emrng{1}{N-1} \bracket{\sum_{t=1}^{N-1} \exlossn + \E_{\emloss_N|\emrng{1}{N-1}} \bracket{\emloss_N(\theta_N)}}                                                                               \\
		 & =\E_\emrng{1}{N-2} \bracket{\sum_{t=1}^{N-2} \exlossn + \exloss_{N-1}(\theta_{N-1}) + \E_{\emrng{N-1}{N}|\emrng{1}{N-2}}\bracket{\emloss_N(\theta_N)}}                                              \\
		 & = \E_\emrng{1}{N-2} \bracket{\sum_{t=1}^{N-2} \exlossn + \E_{\emloss_{N-1}|\emrng{1}{N-2}} \bracket{\emloss_{N-1}(\theta_{N-1})} + \E_{\emrng{N-1}{N}|\emrng{1}{N-2}}\bracket{\emloss_N(\theta_N)}} \\
		 & = \E_\emrng{1}{N-2} \bracket{\sum_{t=1}^{N-2} \exlossn +\E_{\emrng{N-1}{N}|\emrng{1}{N-2}} \bracket{\sum_{t=N-1}^N \emlossn}}
	\end{align*}
	By applying the steps above repeatedly, the desired equality can be obtained.
\end{proof}
\subsection{Putting Together}
Finally, plugging \cref{lm:ol bias dependent rate} and \cref{lm:vanish} into \eqref{eq:regret bound using emerr} yields \eqref{eq:rate in expectation}.

\section{PROOF OF THEOREM 2} \label{sec:proof_thm_2}
\thmhighprob*

\subsection{Decomposition}

The key to avoid the slow rate due to the direct application of martingale concentration analyses on the MDSs in \eqref{eq:regret bound using emerr} and \eqref{eq:regret bound using exerr} is to take a different decomposition of the cumulative loss.
%
Here we construct two \emph{new} MDSs in terms of the gradients: recall $\exerr = \min_{\theta \in \Theta} \sum \exlossx$ 
and let $\theta^\star = \argmin_{\theta \in \Theta} \sum \exlossx$. Then by convexity of $\exloss_n$, we can derive
\begin{align} \label{eq:proof first app}
	 & \quad \sum \exlossn - N \exerr \nonumber                                                                        \\
	 & \le \sum \<\exgrad, \theta_n - \theta^\star \> \nonumber                                                        \\
	 & = \sum  \<\exgrad - \emgrad, \theta_n - \theta^\star \> + \sum \<\emgrad, \theta_n - \theta^\star \>  \nonumber \\
	 & \le  \sum \underbrace{\langle \exgrad - \emgrad,  \theta_n \rangle}_{\text{MDS}}
	- \sum \langle \underbrace{\exgrad - \emgrad}_{\text{MDS}}, \theta^\star \rangle
	+ \regret(\<\nabla \emloss_n(\theta_n), \cdot\>)  
\end{align}

Our proof is based on analyzing these three terms.
The two MDSs are analyzed in \cref{app:mds} and the regret is analyzed in \cref{app:regret}.

\subsection{Upper Bound of the Martingale Concentration} \label{app:mds}

For the MDSs in \eqref{eq:proof first app}, we notice that, for smooth and non-negative functions, the squared norm of the gradient can be bounded by the corresponding function value through \cref{lm:self bounding}. This enables us to properly control the second-order statistics of the MDSs in \eqref{eq:proof first app}. 
By a recent vector-valued martingale concentration inequality that depends only on the second-order statistics \citep{rakhlin2015equivalence}, we obtain a self-bounding property for \eqref{eq:proof first app} to get a fast concentration rate.
The martingale concentration inequality is stated in the following lemma.
\begin{lemma} [Theorem 3 \citep{rakhlin2015equivalence}] \label{lm:pathwise martingale bound}
	Let $\KK$ be a Hilbert space with norm $\norm{\cdot}$ whose dual is $\norm{\cdot}_*$.
	Let $\{z_t\}$ be a $\KK$-valued martingale difference sequence with respect to $\{y_t\}$, i.e., $\E_{z_t|y_1, \dots, y_{t-1}} [z_t] = 0$, and
	let $h$ be a 1-strongly convex function with respect to norm $\norm{\cdot}$ and let $B^2 = \sup_{x, y \in \KK, \norm{x}=1, \norm{y}=1}D_{h}(x||y)$.
	Then for  $\delta \le 1/e$, with probability at least $1-\delta$, the following holds
	\begin{align*}
		\normx{\sum z_t}_* \le 2 B\sqrt{V} + \sqrt{2 \log(1/\delta)} \sqrt{1+1/2\log(2V+2W+1)}
		\sqrt{2V + 2W + 1}
	\end{align*}
	where $V = \sum \norm{z_t}_*^2$ and $W = \sum \E_{z_t|y_1, \dots, y_{t-1}} \norm{z_t}_*^2$.
\end{lemma}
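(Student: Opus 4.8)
The plan is to prove this bound via the ``deterministic regret $\Longrightarrow$ martingale tail bound'' route that is the heart of~\citep{rakhlin2015equivalence}: first reduce the vector-valued deviation $\norm{\sum z_t}_*$ to a genuine scalar martingale plus the \emph{pathwise} regret of an adaptive online linear optimization strategy, and then concentrate that scalar martingale by an exponential-supermartingale / method-of-mixtures argument. Concretely, by duality $\norm{\sum z_t}_* = \max_{\norm{q}\le 1}\<q, \sum z_t\>$. First I would run a deterministic online linear optimization algorithm --- FTRL or mirror descent with the $1$-strongly-convex regularizer $h$ and a step size tuned adaptively to $\sqrt{\sum_{s<t}\norm{z_s}_*^2}$ --- on the loss vectors $-z_1,\dots,-z_T$, producing iterates $q_t$ with $\norm{q_t}\le 1$; its regret guarantee is a deterministic inequality
\[ \max_{\norm{q}\le 1}\<q, \textstyle\sum z_t\> \;-\; \textstyle\sum\<q_t, z_t\> \;\le\; 2B\sqrt{\textstyle\sum\norm{z_t}_*^2} \;=\; 2B\sqrt{V}, \]
where the scaling by $B$ (with $B^2 = \sup D_h(x\|y)$ over the unit ball) and the constant $2$ come from the standard adaptive-regularization analysis. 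Rearranging gives $\norm{\sum z_t}_* \le \sum\<q_t, z_t\> + 2B\sqrt V$.

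The key structural point is that $q_t$ depends only on $z_1,\dots,z_{t-1}$, hence on $y_1,\dots,y_{t-1}$, so that $d_t \coloneqq \<q_t, z_t\>$ satisfies $\E[d_t \mid y_1,\dots,y_{t-1}] = \<q_t, \E[z_t \mid y_1,\dots,y_{t-1}]\> = 0$; i.e., $\{d_t\}$ is a \emph{scalar} martingale difference sequence. Moreover $\abs{d_t} \le \norm{q_t}\norm{z_t}_* \le \norm{z_t}_*$, so its observed quadratic variation $\sum d_t^2 \le \sum\norm{z_t}_*^2 = V$ and its predictable quadratic variation $\sum\E[d_t^2 \mid y_1,\dots,y_{t-1}] \le W$. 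What remains is therefore a one-dimensional martingale concentration bound for $\sum d_t$ in terms of $V$ and $W$.

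For that one-dimensional bound I would use pseudo-maximization (``method of mixtures''). Using the elementary inequality $e^{x-x^2}\le 1+x+x^2$ (valid for all $x\in\R$) one checks that, for every $\lambda>0$,
\[ M_t(\lambda) \;\coloneqq\; \exp\!\Big(\lambda\textstyle\sum_{s\le t} d_s \;-\; \lambda^2\sum_{s\le t}\big(d_s^2 + \E[d_s^2 \mid y_1,\dots,y_{s-1}]\big)\Big) \]
is a supermartingale with $M_0 = 1$ (no restriction on $\lambda$, which is why \emph{both} $V$ and $W$, rather than $W$ alone as in plain Freedman, end up inside the square root). Integrating $M_T(\lambda)$ against a standard Gaussian prior on $\lambda$ preserves the supermartingale property, so by Ville/Markov the mixture exceeds $1/\delta$ with probability at most $\delta$; evaluating the Gaussian integral and solving the resulting inequality yields, with probability $\ge 1-\delta$,
\[ \textstyle\sum d_t \;\le\; \sqrt{(2V+2W+1)\big(2\log(1/\delta) + \log(2V+2W+1)\big)}, \]
where the $\log(2V+2W+1)$ term is exactly the Gaussian-mixture normalization (it is what makes the bound hold \emph{uniformly} over the random, unbounded value of $V+W$) and the ``$+1$'' is the prior variance. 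Finally, since $\delta \le 1/e$ gives $\log(1/\delta) \ge 1$, one has $2\log(1/\delta) + \log(2V+2W+1) \le 2\log(1/\delta)\big(1 + \tfrac12\log(2V+2W+1)\big)$; substituting and adding back the $2B\sqrt V$ term from the reduction gives the stated inequality.

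The step I expect to be the main obstacle is the first one --- making the reduction genuinely pathwise. The usual regret analyses of adaptive FTRL/mirror descent are stated for arbitrary (adversarial) loss sequences, so they already hold for every realized sequence of $z_t$'s; the care needed is (i) extracting the correct constant ($2B$) from the adaptive-step-size telescoping (the ``$\sqrt{\sum_{s<t}}$ vs.\ $\sqrt{\sum_{s\le t}}$'' bookkeeping), and (ii) verifying the bound in a general Hilbert space with a norm not induced by the inner product, where the regularizer $h$ and its strong-convexity constant are precisely what convert $\norm{\cdot}_*$ back into a linear functional. A secondary nuisance is justifying the all-$\lambda$ exponential supermartingale and the exact form of the Gaussian-integral evaluation; both are routine but must be done carefully to land the precise $\sqrt{1 + \tfrac12\log(2V+2W+1)}$ factor.
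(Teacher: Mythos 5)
The paper does not prove this lemma at all---it is imported verbatim as Theorem~3 of \citet{rakhlin2015equivalence}---so there is no in-paper proof to compare against. Your reconstruction is the argument of that cited source: the pathwise reduction via an adaptive mirror-descent regret inequality to a scalar MDS $\sum\<q_t,z_t\>$ plus $2B\sqrt{V}$, followed by de la Pe\~na--style pseudo-maximization using $e^{x-x^2}\le 1+x+x^2$ and a unit-variance Gaussian mixture (which is exactly where the $2V+2W+1$ and the $\sqrt{1+\tfrac12\log(2V+2W+1)}$ factor come from, with $\delta\le 1/e$ folding the additive log into the $2\log(1/\delta)$ term). The approach is correct and essentially the same as the original; the only remaining work is the constant bookkeeping you already flag.
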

In order to apply \cref{lm:pathwise martingale bound} to the MDSs in \eqref{eq:proof first app}, the key is to properly upper bound the statistics $V$ and $W$ in \cref{lm:pathwise martingale bound} for these MDSs.

\subsubsection{Upper Bound of the Concetration for MDS $\langle \exgrad - \emgrad, \theta_n  \rangle$}
Suppose that the decision set $\Theta$ is inside a ball centered at the origin in $\HH$ with radius $R_\Theta$.
\begin{assumption} \label{as:R}
	There exists $R_\Theta \in [0, \infty)$, such that
	$\max_{\theta \in \Theta} \norm{\theta} \le R_\Theta$.
\end{assumption}
Then by the definition of $V$ and $W$ in \cref{lm:pathwise martingale bound}, and the definitions of the two problem-dependent policy class biases $\exerr$ and $\emerr$ (see \cref{def:minimum average loss}), one can obtain
\begin{align}
	V & =\sum |\langle \exgrad - \emgrad, \theta_n  \rangle|^2                  & \nonumber                                                            \\
	  & \le \sum R_\Theta^2 \norm{ \exgrad - \emgrad}_*^2                       & \text{\cref{as:R}}\nonumber                                          \\
	  & \le \sum R_\Theta^2 \paren{2 \norm{\exgrad}_*^2 + 2 \norm{\emgrad}_*^2} & \text{triangle inequality} \label{eq:V for x_t before self bounding} \\
	  & \le \sum R_\Theta^2 \paren{8 \beta \exlossn + 8 \beta \emlossn}         & \text{\cref{lm:self bounding}}\nonumber                              \\
	  & = 8\beta R_\Theta^2 (\exreg+ \emreg+ N\exerr + N\emerr)                 & \text{\cref{def:minimum average loss}} \label{eq:V}
\end{align}
Similarly  for $W$, we have
\begin{align}
	W & = \sum \E_{\emloss_n|\theta_n}[ |\langle \exgrad - \emgrad, \theta_n \rangle|^2] \nonumber                                                            \\
	  & \le\sum R_\Theta^2 \E_{\emloss_n|\theta_n} [\norm{\exgrad - \emgrad}_*^2]                        & \text{\cref{as:R}}
	\nonumber                                                                                                                                                 \\
	  & \le \sum R_\Theta^2 \paren{2\norm{\exgrad}_*^2 + 2 \E_{\emloss_n|\theta_n}[\norm{\emgrad}_*^2] } & \text{triangle inequality}
	\label{eq:W for x_t before self bounding}                                                                                                                 \\
	  & \le \sum R_\Theta^2 \paren{8 \beta \exlossn + 8  \E_{\emloss_n|\theta_n} [\beta \emlossn] }
	  & \text{\cref{lm:self bounding}} \nonumber                                                                                                              \\
	  & = \sum R_\Theta^2 \paren{8 \beta \exlossn + 8 \beta \exlossn } \nonumber                                                                              \\
	  & = 16 \beta R_\Theta^2 (\exreg + N\exerr)                                                         & \text{\cref{def:minimum average loss}}\label{eq:W}
\end{align}
Therefore,
\begin{align} \label{eq:v_w_1}
	V+W \le 24 \beta R_\Theta^2(\exreg + \emreg + N \exerr + N \emerr)
\end{align}
Further suppose that the gradient of the sampled loss can be uniformly bounded:
\begin{assumption} \label{as:G}
	For any loss sequence $\{\emloss_n\}$ that can be experienced by \cref{alg:online imitation learning}, suppose that there is $G\in[0,\infty)$ such that, for any $\theta \in \Theta$, $\norm{\nabla \emloss_n(\theta)}_* \le G$.
\end{assumption}
Then due to \eqref{eq:v_w_1}, $V \le 4G^2R_\Theta^2 N$ and $W \le 4G^2 R_\Theta^2 N$.
Now we are ready to invoke \cref{lm:pathwise martingale bound} by letting the Hilbert space $\KK$ in \cref{lm:pathwise martingale bound} be $\R$, and denoting the corresponding $B$ in \cref{lm:pathwise martingale bound} by $B_\R$.
Then, for $\delta > 1/e$, with probability at least $1-\delta$, the following holds
\begin{align} \label{eq:expected regret for x_t}
	 & \quad \abs{\sum \langle \exgrad - \emgrad, \theta_n  \rangle } \nonumber                                                                         \\
	 & \le 2B_\R \sqrt{8\beta R_\Theta^2}\sqrt{ \exreg + \emreg + N\exerr + N\emerr}  \; +
	\nonumber                                                                                                                                           \\
	 & \sqrt{96\beta R_\Theta^2 \log(1/\delta)} \sqrt{1+1/2\log(16G^2R_\Theta^2 N+1)} \sqrt{\exreg+ N\exerr +\emreg + N\emerr+ 1/(48 \beta R_\Theta^2)}
\end{align}

\subsubsection{Upper Bound of the Concetration for MDS $\exgrad - \emgrad$}
To bound $\|\sum \exgrad - \emgrad\|_*$ that appears in
\begin{align} \label{eq:x_star cauchy}
	\sum \langle \exgrad - \emgrad, \theta^\star \rangle
	\le R_\Theta \|\sum \exgrad - \emgrad\|_*
\end{align}
We use \cref{lm:pathwise martingale bound} again in a similar way of deriving \eqref{eq:expected regret for x_t}, except that this time
the MDS $\exgrad - \emgrad$ is vector-valued.
Akin to showing \eqref{eq:V} and \eqref{eq:W}, the statistics $V$ can be bounded as
\begin{align}
	V & \le \sum  \paren{2 \norm{\exgrad}_*^2 + 2 \norm{\emgrad}_*^2}  \label{eq:V_2} \\
	  & \le 8\beta (\exreg + \emreg + N\exerr + N\emerr)
\end{align}
and similarly for $W$:
\begin{align}
	W & \le \sum \paren{2\norm{\exgrad}_*^2 + 2 \E_{\emloss_n|\theta_n}[\norm{\emgrad}_*^2]} \label{eq:W_2} \\
	  & \le  16 \beta (\exreg + N\exerr)
\end{align}
Therefore,
\begin{align} \label{eq:v_w_2}
	V+W \le 24 \beta (\exreg + \emreg + N \exerr + N \emerr)
\end{align}
Furthermore, by \cref{as:G}, it can be shown from \eqref{eq:V_2} and \eqref{eq:W_2} that $V \le 4G^2 N$ and $W \le 4G^2 N$.
To invoke \cref{lm:pathwise martingale bound},
let $\KK$ in \cref{lm:pathwise martingale bound} be $\HH$, and denote the corresponding $B$ in \cref{lm:pathwise martingale bound} by $B_\HH$.
Then, for $\delta < 1/e$, with probability at least $1-\delta$, the following holds
\begin{align} \label{eq:expected regret for x^star}
	 & \quad \norm{\sum \exgrad - \emgrad}_* \nonumber                         \\
	 & \le 2B_\HH\sqrt{8\beta}\sqrt{ \exreg + \emreg + N\exerr + N\emerr}  \;+
	\nonumber                                                                  \\
	 & \quad \sqrt{96 \beta\log(1/\delta)} \sqrt{1+1/2\log(16G^2 N+1)}
	\sqrt{\exreg + N\exerr + \emreg + N\emerr+ 1/(48\beta)}
\end{align}

\subsection{Upper Bound of the Regret} \label{app:regret}

Besides analyzing the MDSs, we need to bound the regret to the linear functions defined by the gradients (the last term in \eqref{eq:proof first app}).
Since this last term is linear, not CSN, the bias-dependent online regret bound in the proof of \cref{thm:rough bound in expectation} does not apply.
Nonetheless, because these linear functions are based on the gradients of CSN functions,
we discover that their regret rate actually obeys the exact same rate as the regret to the CSN loss functions.
This is notable because the regret to these linear functions upper bounds the regret to the CSN loss functions.

\begin{lemma} \label{lm:ol gradient bias dependent rate}
	Under the same assumptions and setup in \cref{lm:ol bias dependent rate},
	\begin{align} \label{eq:ol gradient result}
		\regret(\<\gn, \cdot\>) \le 8 \beta R_\AA^2 +  \sqrt{8\beta R_\AA^2 N \embnd}.
	\end{align}
\end{lemma}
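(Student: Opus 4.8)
The plan is to mirror the proof of \cref{lm:ol bias dependent rate} almost verbatim, exploiting the fact that admissibility (\cref{def:admissible}) already phrases the regret guarantee in terms of the \emph{linear surrogate} $\<\gn,\cdot\>$ rather than the $f_n$ themselves. Recall that \cref{def:admissible} gives
\begin{align*}
  \regret(f_n) \le \regret(\<\gn, \cdot\>) \le \frac{1}{\eta} R_\AA^2 + \frac{\eta}{2}\sum \norm{\gn}_*^2 .
\end{align*}
The entire argument in \cref{lm:ol bias dependent rate} after invoking admissibility only ever manipulated the right-hand side $\frac{1}{\eta} R_\AA^2 + \frac{\eta}{2}\sum \norm{\gn}_*^2$; it never used the left-most quantity being $\regret(f_n)$ as opposed to $\regret(\<\gn,\cdot\>)$. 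So the first step is simply to observe that the same chain of inequalities, read as a bound on $\regret(\<\gn,\cdot\>)$, goes through unchanged.

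Concretely, I would proceed as follows. First, set $\lambda = \frac{1}{2\eta}$ and $r^2 = 2R_\AA^2$, so that $\frac{1}{\eta}R_\AA^2 + \frac{\eta}{2}\sum\norm{\gn}_*^2 = \lambda r^2 + \frac{1}{4\lambda}\sum\norm{\gn}_*^2$. Second, apply \cref{lm:self bounding} to each CSN function $f_n$ at the point $\theta_n$ to get $\norm{\gn}_*^2 \le 4\beta f_n(\theta_n)$, which yields the self-bounding inequality
\begin{align*}
  \regret(\<\gn,\cdot\>) \le \lambda r^2 + \frac{\beta}{\lambda}\sum f_n(\theta_n)
  \le \lambda r^2 + \frac{\beta}{\lambda}\regret(\<\gn,\cdot\>) + \frac{\beta}{\lambda} N\embnd,
\end{align*}
where the last step uses $\sum f_n(\theta_n) = \regret(f_n) + \min_\theta\sum f_n(\theta) \le \regret(\<\gn,\cdot\>) + N\embnd$ — here is the only place the direction $\regret(f_n)\le\regret(\<\gn,\cdot\>)$ from \cref{def:admissible} is used, and it is used in our favor. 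Third, rearrange to get $\regret(\<\gn,\cdot\>) \le \frac{\beta}{\lambda-\beta}N\embnd + \frac{\lambda^2}{\lambda-\beta}r^2$, optimize over $\lambda$ exactly as in \eqref{eq:lambda} to obtain $\lambda = \beta + \sqrt{\beta^2 + \beta N\embnd/r^2}$ (equivalently the stated choice of $\eta$), simplify the bound to $2\lambda r^2$, and finish with the same $\sqrt{a+b}\le\sqrt{a}+\sqrt{b}$ step to reach $8\beta R_\AA^2 + \sqrt{8\beta R_\AA^2 N\embnd}$.

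Honestly, there is no real obstacle here: the lemma is essentially a restatement of \cref{lm:ol bias dependent rate} that keeps the tighter left-hand side, and the only subtlety worth flagging is that the substitution $\min_\theta\sum f_n(\theta)\le\embnd N$ must be paired with the admissibility inequality in the correct orientation so that the recursion closes on $\regret(\<\gn,\cdot\>)$ rather than circularly. Since $\regret(f_n)\le\regret(\<\gn,\cdot\>)$ holds by \cref{def:admissible}, the linear-surrogate regret is the larger quantity, so bounding it also bounds $\regret(f_n)$ — which, as the surrounding text notes, is exactly the point: we get the bias-dependent rate for the (a priori weaker) linear surrogate regret, which is what the decomposition \eqref{eq:proof first app} needs for the last term.
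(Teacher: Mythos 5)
Your proposal is correct and follows essentially the same route as the paper's own proof: both establish the self-bounding inequality $\regret(\<\gn,\cdot\>) \le \lambda r^2 + \frac{\beta}{\lambda}\regret(\<\gn,\cdot\>) + \frac{\beta}{\lambda}N\embnd$ by combining \cref{lm:self bounding} with the chain $\sum f_n(\theta_n) \le \regret(f_n) + N\embnd \le \regret(\<\gn,\cdot\>) + N\embnd$ from admissibility, and then reuse the optimization over $\lambda$ from \cref{lm:ol bias dependent rate} verbatim. Your flag about the orientation of the admissibility inequality being used "in our favor" is exactly the one subtle point, and you handle it correctly.
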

\begin{proof}
	It suffices to show a self-bounding property for $\regret(\<\gn, \cdot\>)$ as \eqref{eq:concrete self-bounding}. Once this is established, the rest resembles how \eqref{eq:ol result} follows from \eqref{eq:concrete self-bounding} through algebraic manipulations.
	As in \cref{lm:ol bias dependent rate}, define $\lambda = \frac{1}{2\eta}$ and	$r^2 = 2R_\AA^2$.
	Due to the property of admissible online algorithms, one can obtain
	\begin{align}
		\regret(\<\gn, \cdot\>) \le \frac{1}{\eta} R_\AA^2 + \frac{\eta}{2}\sum \norm{\gn}_*^2 =  \lambda r^2 + \sum \frac{1}{4\lambda}\norm{\gn}_*^2
	\end{align}
	To proceed, as in \cref{lm:ol bias dependent rate}, let $\emerr = \frac{1}{N}\min_{\theta \in \Theta} \sum \emlossx$ be the bias, and let $\embnd$ be such that $\embnd\geq \emerr$ almost surely.
	Using \cref{lm:self bounding} and the admissibility of online algorithm $\AA$ yields a self-bounding property for $\regret(\<\gn, \cdot\>)$:
	\begin{align*}
		\regret(\<\gn, \cdot\>) & \le \lambda r^2 +  \frac{\beta}{\lambda} \sum f_n(\theta_n)                                      \\
		                        & \le \lambda r^2 + \frac{\beta}{\lambda} \regret(f_n) + \frac{\beta}{\lambda} N \embnd            \\
		                        & \le \lambda r^2 + \frac{\beta}{\lambda} \regret(\<\gn, \cdot\>) + \frac{\beta}{\lambda} N \embnd
	\end{align*}
	This self-bounding property is exactly like what we have seen in the self-bounding property for $\regret(f_n)$.
	After rearranging and computing the optimal $\lambda$ (which coincides with the optimal $\lambda$ in \cref{lm:ol bias dependent rate}), \eqref{eq:ol gradient result} follows.
\end{proof}
\cref{lm:ol gradient bias dependent rate} provides a bias-dependent regret to the linear functions defined by the gradients when the (stepsize) constant $\eta$ is set optimally in the online algorithm $\AA$ (used in~\cref{alg:online imitation learning}).
Interestingly, the optimal $\eta$ that achieves the bias-dependent regret coincides with the one for achieving a bias-dependent regret to CSN functions. Therefore, a bias-dependent bound for $\emreg$ and $\gradreg$ can be achieved simultaneously.


\subsection{Putting Things Together}
We now have all the pieces to prove \cref{thm:rough bound in high probability}.
Plugging \eqref{eq:expected regret for x_t}, \eqref{eq:x_star cauchy}, and \eqref{eq:expected regret for x^star} into the decomposition \eqref{eq:proof first app}, we have, for $\delta < 1/e$, with probability at least $1-2\delta$
\begin{align*}
	 & \quad \exreg                                                                                                                                      \\
	 & \le 2B_\R \sqrt{8\beta R_\Theta^2}\sqrt{ \exreg + \emreg + N\exerr + N\emerr}
	\\
	 & +\sqrt{96\beta R_\Theta^2 \log(1/\delta)} \sqrt{1+1/2\log(16G^2R_\Theta^2 N+1)} \sqrt{\exreg+ N\exerr +\emreg + N\emerr+ 1/(48 \beta R_\Theta^2)}
	\\
	 & + 2B_\HH\sqrt{8\beta R_\Theta^2}\sqrt{ \exreg + \emreg + N\exerr + N\emerr}                                                                       \\
	 & +  \sqrt{96 \beta\log(1/\delta)} \sqrt{1+1/2\log(16G^2 N+1)}
	\sqrt{\exreg + N\exerr + \emreg + N\emerr+ 1/(48 \beta)}                                                                                             \\
	 & + \regret(\<\nabla \emloss_n(\theta_n), \cdot\>)
\end{align*}
To simplify it, we denote
\begin{align*}
	A_1      & = 8 \max(B_\R, B_\HH) \sqrt{2 \beta R_\Theta^2},                                             \\
	A_2      & = 8\sqrt{6 \beta R_\Theta^2 \log(1/\delta)} \sqrt{1 +1/2 \log(16G^2\max(1, R_\Theta^2)N+1)}, \\
	\tilde R & = \min(1, R_\Theta)
\end{align*}
Plugging them into the above upper bound on $\exreg$ and using the basic inequality $\sqrt{a+b} \le \sqrt{a}+\sqrt{b}$ yield
\begin{align*}
	\exreg
	 & \le (A_1 + A_2) \sqrt{\exreg + \emreg} + (A_1+A_2) \sqrt{ N\exerr + N\emerr}
	\\
	 &
	+ \frac{A_2}{\sqrt{48 \beta \tilde R^2}} + \regret(\<\nabla \emloss_n(\theta_n), \cdot\>)
\end{align*}
To further simplify, using the basic inequality $\sqrt{ab} \le (a+b)/2$ yields
\begin{align*}
	\exreg & \le
	\frac{\exreg}{2} + \frac{\emreg}{2} +(A_1 + A_2) \sqrt{ N\exerr +  N\emerr}
	\\
	       & + \frac{(A_1 + A_2)^2}{2} + \frac{A_2}{\sqrt{48 \beta \tilde R^2}} + \regret(\<\nabla \emloss_n(\theta_n), \cdot\>)
\end{align*}
Rearranging terms and invoking the bias-dependent rate in  \cref{lm:ol bias dependent rate} and \cref{lm:ol gradient bias dependent rate} give
\begin{align} \label{eq:plug in bias-dependent rate}
	\exreg & \le \emreg +2(A_1 + A_2)\sqrt{ N\exerr +  N\emerr}+
	(A_1+A_2)^2 + \frac{A_2}{\sqrt{12 \beta \tilde R^2}} + 2\regret(\<\nabla \emloss_n(\theta_n), \cdot\>) \nonumber
	\\
	       & \le  2(A_1 + A_2)\sqrt{ N\exerr +  N\emerr} +   6\sqrt{2\beta R_\AA^2 N \embnd} +
	(A_1+A_2)^2 + \frac{A_2}{\sqrt{12 \beta \tilde R^2}} +
	24 \beta R_\AA^2
\end{align}
Finally, to derive a big-$O$ bound, denote
\begin{align*}
	R = \max(1, R_\Theta, R_\AA), \quad C = \log(1/\delta) \log(G R N)
\end{align*}
then one can obtain the rate \emph{in terms of $N$} in big-$O$ notation,  while keeping $\tilde R$, $R$, $B_\R$, $B_\HH$, $\log(1/\delta)$, $G$, $\exerr$,  and $\embnd$ as multipliers:
\begin{align*}
	\exreg = O\paren{C \beta   R^2 +
		\sqrt{C \beta   R^2 N(\embnd+ \exerr)}}
\end{align*}
Therefore
\begin{align*}
	\frac{1}{N}\sum \exlossn - \exerr  = O\paren{\frac{C  \beta R^2}{N} +
		\sqrt{\frac{C \beta  R^2  (\embnd+ \exerr)}{N}}}
\end{align*}

\blue{
	\section{ONLINE IL WITH ADAPTIVE STEPSIZES} \label{app:adaptive stepsizes}
}
In~\cref{sec:proof_thm_1} and~\cref{sec:proof_thm_2}, we proved new bias-dependent rates in expectation (\cref{thm:rough bound in expectation}) and in high-probability (\cref{thm:rough bound in high probability}).
However, these rates hold only provided that the stepsize of the online algorithm $\AA$ in~\cref{alg:online imitation learning} is constant and properly tuned; this requires knowing in advance the smoothness factor $\beta$, an upper bound of the bias $\hat \epsilon$, and the number of rounds $N$.
Therefore, these theorems are not directly applicable to practical online IL algorithms that update the stepsize adaptively without knowing the constants beforehand.

Fortunately, \cref{thm:rough bound in expectation} and~\cref{thm:rough bound in high probability} can be adapted to online IL algorithms that utilize online algorithms with adaptive stepsizes in a straightforward manner, which we shall show next.
The key insight is that online algorithms with adaptive stepsizes obtain almost the same guarantee as they would have known the optimal constant stepsize in advance.
For example, \citet[Theorem 4.14]{orabona2019modern} shows that for Online Subgradient Descent~\citep{zinkevich2003online}, the difference between the guarantees of using the optimal constant stepsize and the guarantee of using adaptive stepsizes $\eta_n = \frac{\sqrt{2}D}{2 \sqrt {\sum_{i=1}^n \|g_i\|_2^2}}$ is only a factor of $\sqrt{2}$.
\begin{lemma}[Theorem 4.14~\citep{orabona2019modern}] \label{lm:adaptive online algorithm}
	Let $V \subseteq \R^d$ a closed non-empty convex set with diameter $D$, i.e., $\max_{x, y \in V} \|x -y\|_2 \le D$. Let $f_1, \dots, f_N$ be an arbitrary sequence of non-negative convex functions $f_n: \R^d \to (-\infty, +\infty]$ differentiable in open sets containing $V$ for $t=1, \dots, T$. Pick any $x_1 \in V$ and stepsize $\eta_n = \frac{\sqrt{2}D}{2 \sqrt {\sum_{i=1}^n \|\nabla f_i(x_i\|_2^2}}$, $n = 1, \dots, N$. Then the following regret bound holds for online subgradient descent:
	\begin{align} \label{eq:adaptive online algorithm}
		\mathrm{Regret}(f_n) \le  \sqrt{2} \min_{\eta>0}\paren{\frac{D^2}{2\eta} + \frac{\eta}{2} \sum_{n=1}^T \|\nabla f_n(x_n)\|_2^2}
	\end{align}
\end{lemma}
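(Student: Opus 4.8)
The plan is to run the textbook analysis of projected online subgradient descent with a time-varying stepsize and then specialize the stepsize to the stated AdaGrad-type rule. Write $g_n = \nabla f_n(x_n)$ and recall the update $x_{n+1} = \Pi_V(x_n - \eta_n g_n)$, where $\Pi_V$ is the Euclidean projection onto $V$; note that all iterates stay in $V$ since $x_1 \in V$. First I would use non-expansiveness of $\Pi_V$: for any $u \in V$, $\|x_{n+1}-u\|_2^2 \le \|x_n - \eta_n g_n - u\|_2^2 = \|x_n-u\|_2^2 - 2\eta_n \langle g_n,\, x_n-u\rangle + \eta_n^2\|g_n\|_2^2$. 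Rearranging and using convexity of $f_n$ (so $f_n(x_n)-f_n(u) \le \langle g_n,\, x_n - u\rangle$) yields the per-round inequality $f_n(x_n)-f_n(u) \le \frac{\|x_n-u\|_2^2 - \|x_{n+1}-u\|_2^2}{2\eta_n} + \frac{\eta_n}{2}\|g_n\|_2^2$. Summing and taking $u$ to be a minimizer of $\sum_n f_n$ over $V$ gives $\mathrm{Regret}(f_n) \le \sum_n \frac{\|x_n-u\|_2^2-\|x_{n+1}-u\|_2^2}{2\eta_n} + \sum_n \frac{\eta_n}{2}\|g_n\|_2^2$.

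Next I would bound the two sums separately. For the first, the key point is that $\eta_n$ is nonincreasing because its denominator $\sqrt{\sum_{i \le n}\|g_i\|_2^2}$ is nondecreasing; hence $1/\eta_n$ is nondecreasing, and Abel summation together with the diameter bound $\|x_n - u\|_2 \le D$ (discarding the last, nonpositive term) gives $\sum_n \frac{\|x_n-u\|_2^2-\|x_{n+1}-u\|_2^2}{2\eta_n} \le \frac{D^2}{2\eta_1} + \frac{D^2}{2}\sum_{n\ge 2}\paren{\frac{1}{\eta_n}-\frac{1}{\eta_{n-1}}} = \frac{D^2}{2\eta_T}$. For the second sum, writing $a_n = \|g_n\|_2^2$ and using the definition of $\eta_n$ reduces everything to the AdaGrad-type inequality $\sum_{n} \frac{a_n}{\sqrt{\sum_{i\le n} a_i}} \le 2\sqrt{\sum_n a_n}$, which I would prove by a short induction on the horizon using only the elementary bound $\sqrt{\sum_{i\le n-1}a_i}+\sqrt{\sum_{i\le n}a_i}\le 2\sqrt{\sum_{i\le n}a_i}$.

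Finally I would substitute $\eta_n = \frac{\sqrt 2\, D}{2\sqrt{\sum_{i\le n}\|g_i\|_2^2}} = \frac{D}{\sqrt 2\,\sqrt{\sum_{i\le n}\|g_i\|_2^2}}$. Writing $S = \sum_n \|g_n\|_2^2$, the first sum becomes $\frac{D^2}{2\eta_T} = \frac{D}{\sqrt 2}\sqrt{S}$ and the second is at most $\frac{D}{2\sqrt 2}\cdot 2\sqrt{S} = \frac{D}{\sqrt 2}\sqrt{S}$, so $\mathrm{Regret}(f_n) \le \sqrt 2\, D\sqrt{S}$. Separately, AM--GM gives $\min_{\eta>0}\paren{\frac{D^2}{2\eta}+\frac{\eta}{2}S} = D\sqrt{S}$ (attained at $\eta = D/\sqrt S$), so the claimed right-hand side is exactly $\sqrt 2\, D\sqrt{S}$ and the two agree. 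The only real obstacle I anticipate is the bookkeeping for the varying-stepsize telescoping and pinning down the constant $2$ in the AdaGrad sum lemma; the rest is routine algebra --- and in particular the non-negativity hypothesis on the $f_n$ is never used in this bound (it is carried along only for the downstream application).
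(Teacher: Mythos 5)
Your proof is correct: the per-round projection inequality, Abel summation against the nonincreasing stepsize to get the $D^2/(2\eta_T)$ term, and the elementary bound $\sum_n a_n/\sqrt{\textstyle\sum_{i\le n}a_i}\le 2\sqrt{\textstyle\sum_n a_n}$ together give exactly $\sqrt{2}\,D\sqrt{\textstyle\sum_n\|g_n\|_2^2}$, which equals the stated right-hand side by AM--GM. The paper does not prove this lemma itself but imports it as Theorem 4.14 of \citet{orabona2019modern}, and your argument is precisely the standard proof given there (your remark that non-negativity of the $f_n$ is never used is also accurate).
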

Inspired by this insight, we propose a more general definition of admissible online algorithms (cf.~\cref{def:admissible}) and a notion of proper stepsizes:
\begin{definition}[General admissible online algorithm]
	\label{def:admissible new}
	We say an online algorithm $\AA$ is \emph{admissible} on a parameter space $\Theta$, if there exists $R_\AA\in[0,\infty)$ such that given any sequence of differentiable convex functions $f_n$ and stepsizes $\eta_n$, $\AA$ can achieve
	$ \regret(f_n) \le \regret(\<\gn, \cdot\>) \le \frac{1}{\eta} R_\AA^2 + \frac{1}{2}\sum \eta_n \norm{\gn}_*^2$,
	where 
	$\theta_n$ is the decision made by $\AA$ in round $n$.
\end{definition}
The admissible online algorithms that we discussed in the last paragraph of~\cref{app:online regret} also belong to this category of genearal admissible algorithms.
\begin{definition}[Proper stepsizes] \label{def:proper stepsizes}
	A stepsize adaptation rule is \emph{proper} if
	there exists $K \in (0, \infty)$ such that for
	any admissible online algorithm $\AA$ (\cref{def:admissible new}) with the stepsize $\eta_n$ chosen according to the rule based on the information till round $n$
	can achieve $ \regret(f_n) \le \regret(\<\gn, \cdot\>) \le  K \min_{\eta > 0}\paren{\frac{1}{\eta} R_\AA^2 + \frac{1}{2}\sum \eta \norm{\gn}_*^2}$.
\end{definition}
With the general definition of admissible online algorithms (\cref{def:admissible new}) and the definition of proper stepsizes (\cref{def:proper stepsizes}), we can now extend the bias-dependent regret (\cref{lm:ol bias dependent rate}) which assumes optimal constant stepsize to adaptive online algorithms with proper stepsizes.
This lemma will be the foundation of extending~\cref{thm:rough bound in expectation} and~\cref{thm:rough bound in high probability}.
\begin{lemma} \label{lm:adaptive ol bias dependent rate}
	Consider running an admissible online algorithm $\AA$ on a sequence of CSN loss functions $\{f_n\}$ with adaptive stepsizes that are proper.
	Let $\{\theta_n\}$ denote the online decisions made in each round, and let $\emerr = \frac{1}{N}\min_{\theta \in \Theta} \sum f_n(\theta)$ be the bias, and let $\embnd$ be such that $\embnd\geq \emerr$ almost surely.
	Then the following holds
	\begin{align*}
		\regret(f_n) \le 8 K^2 \beta R_\AA^2 +  \sqrt{8 K^2\beta R_\AA^2 N \embnd}.
	\end{align*}
\end{lemma}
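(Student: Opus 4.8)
The plan is to mirror the proof of \cref{lm:ol bias dependent rate} almost verbatim, replacing the single optimally-tuned constant stepsize by the proper adaptive rule and paying the price of the factor $K$ from \cref{def:proper stepsizes}. First I would invoke the general admissibility of $\AA$ (\cref{def:admissible new}) together with the proper-stepsize property to write
\begin{align*}
  \regret(f_n) \le \regret(\<\gn,\cdot\>) \le K \min_{\eta>0}\paren{\tfrac{1}{\eta}R_\AA^2 + \tfrac{\eta}{2}\sum\norm{\gn}_*^2}.
\end{align*}
Then, exactly as in \cref{lm:ol bias dependent rate}, I would set $\lambda = \tfrac{1}{2\eta}$ and $r^2 = 2R_\AA^2$, so the right-hand side becomes $K\min_{\lambda>0}\paren{\lambda r^2 + \tfrac{1}{4\lambda}\sum\norm{\gn}_*^2}$. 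Applying \cref{lm:self bounding} gives $\sum\norm{\gn}_*^2 \le 4\beta\sum f_n(\theta_n) \le 4\beta\paren{\regret(f_n) + N\embnd}$, which again yields a self-bounding inequality, now of the form
\begin{align*}
  \regret(f_n) \le K\paren{\lambda r^2 + \tfrac{\beta}{\lambda}\regret(f_n) + \tfrac{\beta}{\lambda}N\embnd}
\end{align*}
for the optimizing $\lambda$ (the minimization over $\lambda$ can be resolved at the end, after the self-bounding rearrangement, since the bound is monotone in the bracketed quantity).

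Next I would rearrange: the self-bounding inequality becomes $\paren{1 - \tfrac{K\beta}{\lambda}}\regret(f_n) \le K\lambda r^2 + \tfrac{K\beta}{\lambda}N\embnd$, valid once $\lambda > K\beta$, giving $\regret(f_n) \le \tfrac{K\beta}{\lambda - K\beta}N\embnd + \tfrac{K\lambda^2}{\lambda - K\beta}r^2$. This is structurally identical to \eqref{eq:proof thm1 bound} with $\beta$ replaced by $K\beta$ and an extra leading $K$. Optimizing over $\lambda$ as in \eqref{eq:lambda}, the first-order condition becomes $r^2\lambda^2 - 2K\beta r^2\lambda - K\beta N\embnd = 0$, so $\lambda = K\beta + \sqrt{K^2\beta^2 + K\beta N\embnd/r^2}$, and substituting back (using the same algebraic simplification $\beta N\embnd$-substitution that collapses \eqref{eq:proof thm1 bound} to $2\lambda r^2$ in the proof of \cref{lm:ol bias dependent rate}) gives $\regret(f_n) \le 2K\lambda r^2$. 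Plugging in the optimal $\lambda$ and using $\sqrt{a+b}\le\sqrt{a}+\sqrt{b}$ yields
\begin{align*}
  \regret(f_n) \le 2K\paren{K\beta + \sqrt{K^2\beta^2 + K\beta N\embnd/r^2}}r^2 \le 8K^2\beta R_\AA^2 + \sqrt{8K^2\beta R_\AA^2 N\embnd},
\end{align*}
which is the claimed bound.

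I do not expect a serious obstacle here; the only point requiring a little care is the interplay between the $\min_\eta$ inside the proper-stepsize guarantee and the self-bounding argument, since $\sum\norm{\gn}_*^2$ depends on the trajectory $\{\theta_n\}$ produced by the actual adaptive run, not by the hypothetical optimal-constant-stepsize run. The resolution is that \cref{def:proper stepsizes} already bounds the \emph{realized} regret of the adaptive algorithm by $K$ times the value of the minimization problem whose objective uses the realized gradients $\norm{\gn}_*^2$; so the self-bounding step applies verbatim to those same realized gradients, and the minimization over $\lambda$ (equivalently $\eta$) is then a purely analytic step on the resulting scalar bound. Hence the $K$ enters exactly where $\beta$ did, with one extra overall factor, producing the $K^2$ in the first term and the $K^2$ under the square root in the second, matching the statement.
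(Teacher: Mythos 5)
Your proposal follows the paper's proof of this lemma essentially verbatim: same appeal to \cref{def:admissible new} and \cref{def:proper stepsizes}, same substitution $\lambda = \tfrac{1}{2\eta}$, $r^2 = 2R_\AA^2$, same self-bounding step via \cref{lm:self bounding}, same rearrangement and optimization over $\lambda$, and you correctly handle the one real subtlety (the proper-stepsize guarantee bounds the realized regret by $K$ times a minimum over $\eta$ evaluated at the realized gradients, so the self-bounding argument applies to the actual trajectory and the choice of $\lambda$ is a purely analytic afterthought). The only flaw is an arithmetic slip in the first-order condition: from $\regret(f_n) \le \tfrac{K\beta}{\lambda - K\beta}N\embnd + \tfrac{K\lambda^2}{\lambda - K\beta}r^2$ the stationarity equation is $r^2\lambda^2 - 2K\beta r^2\lambda - \beta N\embnd = 0$ (no factor $K$ on the last term --- in the paper's notation $\hat\beta = K\beta$, $\hat r^2 = Kr^2$, and the two $K$'s cancel in the ratio $\hat\beta/\hat r^2$), so the optimal value is $\lambda = K\beta + \sqrt{K^2\beta^2 + \beta N\embnd/r^2}$ rather than your $K\beta + \sqrt{K^2\beta^2 + K\beta N\embnd/r^2}$. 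If you carry your $\lambda$ through, the collapsed bound $2K\lambda r^2$ evaluates to $8K^2\beta R_\AA^2 + \sqrt{8K^3\beta R_\AA^2 N\embnd}$, which for $K>1$ is strictly weaker than (and does not imply) the claimed $\sqrt{8K^2\beta R_\AA^2 N\embnd}$ term. With the corrected $\lambda$ the rest of your algebra goes through unchanged and yields exactly the stated bound.
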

\begin{proof}
	Because the online algorithm $\AA$ is admissible and the stepsizes are proper, we have, for \emph{any} $\eta > 0$
	\begin{align} \label{eq:adaptive constant stepsize}
		\regret(f_n)  \le \frac{K}{\eta} R_\AA^2 + \frac{K \eta}{2}\sum \norm{\gn}_*^2
	\end{align}
	Let $\lambda = \frac{1}{2\eta}$ and $r^2 = 2R_\AA^2$, then
	\begin{align}
		\frac{K}{\eta} R_\AA^2 + \frac{K \eta}{2}\sum \norm{\gn}_*^2
		=  K \lambda r^2 + \sum \frac{K}{4\lambda}\norm{\gn}_*^2
	\end{align}
	Using \cref{lm:self bounding} yields a \emph{self-bounding property} for $\regret(f_n)$:
	\begin{align} \label{eq:adaptive concrete self-bounding}
		\regret(f_n) \le K \lambda r^2 +  \frac{K\beta}{\lambda} \sum f_n(\theta_n) \le K\lambda r^2 + \frac{K\beta}{\lambda} \regret(f_n) + \frac{K\beta}{\lambda} N \embnd
	\end{align}
	Let $\hat \beta = K\beta$ and $\hat r^2 =K r^2$, and
	by rearranging the terms, we have a bias-dependent upper bound (cf.~\eqref{eq:proof thm1 bound}) that for any $\eta> 0$,
	\begin{align} \label{eq:adaptive proof thm1 bound}
		\regret(f_n) \le \frac{\hat \beta}{\lambda -  \hat \beta} N\embnd + \frac{\lambda^2}{\lambda - \hat \beta} \hat r^2
	\end{align}
	The upper bound can be minimized by choosing an optimal $\lambda$.
	Setting the derivative of the right-hand side to zero, and computing the optimal $\lambda$ ($\lambda > 0$) gives us
	\begin{align} \label{eq:adaptive lambda}
		\hat r^2 \lambda^2 - 2 \hat \beta \hat r^2 \lambda - \hat \beta N \embnd = 0, \quad \lambda > 0 \quad \text{ and } \quad \lambda = \hat \beta + \sqrt{\hat \beta^2 + \frac{\hat \beta N \embnd}{\hat r^2}}
	\end{align}
	which implies that the optimal $\eta$ is $\frac{1}{2\paren{\hat \beta + \sqrt{\hat \beta^2 + \frac{\hat \beta N \embnd}{2 R_\AA^2}}}}$.
	Because~\eqref{eq:adaptive proof thm1 bound} holds for any $\eta$, it holds for the optimal $\eta$ too.
	Next, we simplify~\eqref{eq:adaptive proof thm1 bound}.
	Since the optimal $\lambda$ satisfies the equality $\hat \beta N \embnd = \hat r^2 \lambda^2 - 2 \hat \beta \hat r^2 \lambda$ implied from \eqref{eq:adaptive lambda}, \eqref{eq:adaptive proof thm1 bound} can be written as
	\begin{align}
		\regret(f_n) & \le \frac{1}{\lambda-\hat \beta} \hat \beta N \embnd + \frac{\lambda^2}{\lambda - \hat \beta} \hat r^2 =
		\frac{1}{\lambda - \hat \beta}(\hat r^2 \lambda^2 - 2 \hat \beta \hat r^2 \lambda) +\frac{\lambda^2}{\lambda - \hat \beta} \hat r^2 \nonumber \\
		             & =\frac{2\lambda^2 \hat r^2 - 2\hat \beta \lambda \hat r^2}{\lambda - \hat \beta}
		= 2 \lambda \hat r^2
	\end{align}
	Plugging in the optimal $\lambda$ yields
	\begin{align} \label{eq:adaptive ol result}
		\regret(f_n) & \le 2 \lambda \hat r^2
		= 2 \paren{\hat \beta + \sqrt{\hat \beta^2 + \frac{\hat \beta N\embnd }{\hat r^2}}} \hat r^2\nonumber                  \\
		             & = 2 \hat \beta \hat r^2 + \sqrt{2\hat \beta \hat r^2}  \sqrt{2\hat \beta \hat r^2 + 2N\embnd} \nonumber \\
		             & \le 4 \hat \beta \hat r^2 + 2 \sqrt{\hat \beta \hat r^2 N \embnd} \nonumber                             \\
		             & = 8 K^2 \beta R_\AA^2 +  \sqrt{8K^2  \beta R_\AA^2 N \embnd}
	\end{align}
	where the last inequality uses the basic inequality: $\sqrt{a+b}\leq\sqrt{a}+\sqrt{b}$.
\end{proof}
Provided the bias-dependent regret~\cref{lm:adaptive ol bias dependent rate} (cf.~\cref{lm:ol bias dependent rate}),
a bias-dependent rate in expectation for online IL with an additional constant $K$ due to the adaptive stepsizes (cf.~\cref{thm:rough bound in expectation}) follows directly, because~\cref{lm:vanish} still holds even if the stepsizes of the online algorithm in~\cref{alg:online imitation learning} become adaptive.
In order to extend~\cref{thm:rough bound in high probability} to admissible online algorithm with adaptive stepsizes,
we first need to derive a bias-dependent regret to the linear functions defined by the gradients (cf.~\cref{lm:ol gradient bias dependent rate}) based on the proof of~\cref{lm:adaptive ol bias dependent rate}.
\begin{lemma} \label{lm:adaptive ol gradient bias dependent rate}
	Under the same assumptions and setup in \cref{lm:adaptive ol bias dependent rate},
	\begin{align} \label{eq:adaptive ol gradient result}
		\regret(\<\gn, \cdot\>) \le 8 K^2 \beta R_\AA^2 +  \sqrt{8K^2 \beta R_\AA^2 N \embnd}.
	\end{align}
\end{lemma}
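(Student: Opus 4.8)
The plan is to mirror the proof of \cref{lm:ol gradient bias dependent rate}, but carrying the constant $K$ through exactly as in the proof of \cref{lm:adaptive ol bias dependent rate}. The goal is to show that $\regret(\<\gn, \cdot\>)$ obeys the same self-bounding inequality \eqref{eq:adaptive concrete self-bounding} as $\regret(f_n)$; once this is in place, the identical algebraic steps---including the same optimal choice of $\lambda$---deliver the stated bound.

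First I would invoke the general admissibility of $\AA$ (\cref{def:admissible new}) together with the properness of the adaptive stepsizes (\cref{def:proper stepsizes}), which combine to give, for every $\eta > 0$, $\regret(\<\gn, \cdot\>) \le \frac{K}{\eta} R_\AA^2 + \frac{K\eta}{2}\sum \norm{\gn}_*^2$. Setting $\lambda = \frac{1}{2\eta}$ and $r^2 = 2 R_\AA^2$ rewrites the right-hand side as $K \lambda r^2 + \sum \frac{K}{4\lambda}\norm{\gn}_*^2$, exactly as in the opening of \cref{lm:adaptive ol bias dependent rate}. I would then apply the self-bounding property of CSN functions, \cref{lm:self bounding}, to bound $\norm{\gn}_*^2 \le 4\beta f_n(\theta_n)$, obtaining $\regret(\<\gn, \cdot\>) \le K\lambda r^2 + \frac{K\beta}{\lambda}\sum f_n(\theta_n)$.

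Next I would write $\sum f_n(\theta_n) = \regret(f_n) + \min_{\theta\in\Theta}\sum f_n(\theta) \le \regret(f_n) + N\embnd$, and then use the elementary inequality $\regret(f_n) \le \regret(\<\gn, \cdot\>)$---which is built into \cref{def:admissible new} and itself follows from the convexity of each $f_n$---to close the loop: $\regret(\<\gn, \cdot\>) \le K\lambda r^2 + \frac{K\beta}{\lambda}\regret(\<\gn, \cdot\>) + \frac{K\beta}{\lambda} N\embnd$. This is precisely \eqref{eq:adaptive concrete self-bounding} with $\regret(f_n)$ replaced by $\regret(\<\gn, \cdot\>)$, so the remainder of the argument is identical to that in \cref{lm:adaptive ol bias dependent rate}: substitute $\hat\beta = K\beta$ and $\hat r^2 = K r^2$, rearrange into the bias-dependent form \eqref{eq:adaptive proof thm1 bound}, minimize over $\lambda > 0$ (the minimizer agrees with \eqref{eq:adaptive lambda}, so the optimal constant stepsize is unchanged), and simplify via $\sqrt{a+b} \le \sqrt{a} + \sqrt{b}$ to obtain $\regret(\<\gn, \cdot\>) \le 8 K^2 \beta R_\AA^2 + \sqrt{8 K^2 \beta R_\AA^2 N \embnd}$.

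There is no genuine obstacle here; the proof is a routine transfer of the $\regret(f_n)$ argument to $\regret(\<\gn, \cdot\>)$ using the sandwich $\regret(f_n) \le \regret(\<\gn, \cdot\>) \le (\text{admissibility bound})$, which lets the self-bounding recursion be written directly for $\regret(\<\gn, \cdot\>)$. The only point requiring care is bookkeeping of the constant $K$: it enters once through $\hat\beta = K\beta$ and once through $\hat r^2 = K r^2$, so it must emerge as $K^2$ rather than $K$ in the final bound, consistent with \cref{lm:adaptive ol bias dependent rate}.
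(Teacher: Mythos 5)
Your proposal is correct and matches the paper's own proof essentially step for step: the same admissibility-plus-properness bound, the same application of \cref{lm:self bounding}, the same use of $\regret(f_n) \le \regret(\<\gn, \cdot\>)$ to close the self-bounding recursion, and the same optimal-$\lambda$ algebra with $\hat\beta = K\beta$ and $\hat r^2 = Kr^2$ yielding the $K^2$ factor. No gaps.
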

\begin{proof}
	It suffices to show a self-bounding property for $\regret(\<\gn, \cdot\>)$ as \eqref{eq:adaptive concrete self-bounding}. Once this is established, the rest resembles how \eqref{eq:adaptive ol result} follows from \eqref{eq:adaptive concrete self-bounding} through algebraic manipulations.
	As in \cref{lm:adaptive ol bias dependent rate}, define $\lambda = \frac{1}{2\eta}$ and	$r^2 = 2R_\AA^2$.
	Due to the property of admissible online algorithms, one can obtain, for any $\eta$
	\begin{align}
		\regret(\<\gn, \cdot\>) \le \frac{K}{\eta} R_\AA^2 + \frac{K\eta}{2}\sum \norm{\gn}_*^2 =  K \lambda r^2 + \sum \frac{K}{4\lambda}\norm{\gn}_*^2
	\end{align}
	To proceed, as in \cref{lm:ol bias dependent rate}, let $\emerr = \frac{1}{N}\min_{\theta \in \Theta} \sum \emlossx$ be the bias, and let $\embnd$ be such that $\embnd\geq \emerr$ almost surely.
	Using \cref{lm:self bounding} and the admissibility of online algorithm $\AA$ yields a self-bounding property for $\regret(\<\gn, \cdot\>)$:
	\begin{align*}
		\regret(\<\gn, \cdot\>) & \le K \lambda r^2 +  \frac{K \beta}{\lambda} \sum f_n(\theta_n)                                     \\
		                        & \le K \lambda r^2 + \frac{K\beta}{\lambda} \regret(f_n) + \frac{K\beta}{\lambda} N \embnd           \\
		                        & \le K\lambda r^2 + \frac{K\beta}{\lambda} \regret(\<\gn, \cdot\>) + \frac{K\beta}{\lambda} N \embnd
	\end{align*}
	This self-bounding property is exactly like what we have seen in the self-bounding property for $\regret(f_n)$.
	After rearranging and computing the optimal $\lambda$ (which coincides with the optimal $\lambda$ in \cref{lm:ol bias dependent rate}), \eqref{eq:adaptive ol gradient result} follows.
\end{proof}
Given the bias-dependent rates in online learning (\cref{lm:adaptive ol bias dependent rate} and~\cref{lm:adaptive ol gradient bias dependent rate}),
a high-probability bias-dependent rate for online IL with an additional constant $K$ due to adaptive stepsizes (cf.~\cref{thm:rough bound in high probability}) can the derived in the same way as the proof of~\cref{thm:rough bound in high probability}, except that~\cref{lm:adaptive ol bias dependent rate} and~\cref{lm:adaptive ol gradient bias dependent rate} will be invoked in~\eqref{eq:plug in bias-dependent rate} in place of~\cref{lm:ol bias dependent rate} and~\cref{lm:ol gradient bias dependent rate}.

Interestingly, \citet[Theorem 4.21]{orabona2019modern} provides a bias-dependent regret for Online Subgradient Descent with adaptive stepsizes $\eta_n = \frac{\sqrt{2}D}{2 \sqrt {\sum_{i=1}^n \|g_i\|_2^2}}$ (cf.~\cref{lm:adaptive ol bias dependent rate}).
Although that regret bound would help more directly prove~\cref{thm:rough bound in expectation} in the adaptive stepsize setting, but it does not directly imply a bias-dependent regret to the linear functions defined by the gradients (cf.~\cref{lm:adaptive ol gradient bias dependent rate})

\section{EXPERIMENT DETAILS} \label{app:exp}
Although the main focus of this paper is the new theoretical insights, we conduct experiments to provide evidence that the fast policy improvement phenomena indeed exist, as our theory predicts. We verify the change of the policy improvement rate due to policy class capacity by running an online IL experiment in the CartPole balancing task in OpenAI Gym~\citep{brockman2016openai} with DART physics engine~\citep{Lee2018}.

\subsection{MDP Setup}
The goal of the CartPole balancing task is to keep the pole upright by controlling the acceleration of the cart.
This MDP has a 4-dimensional continuous state space (the position and the velocity of the cart and the pole), and 1-dimensional continuous action space (the acceleration of the cart).
The initial state is a configuration with a small uniformly sampled offset from being static and vertical, and the dynamics is deterministic.
This task has a maximum horizon of $1000$.
In each time step, if the pole is maintained within a threshold from being upright, the learner receives an instantaneous reward of one; otherwise, the learner receives zero reward and the episode terminates.
Therefore, the maximum sum of rewards for an episode is $1000$.

\subsection{Expert Policy Representation and Training}
To simulate the online IL task, we consider a neural network expert policy (with one hidden layer of $64$ units and $\mathrm{tanh}$ activation), and the inputs to the neural network is normalized using a moving average over the samples.
The expert policy is trained using a model-free policy gradient method (ADAM~\citep{kingma2014adam} with GAE \citep{schulman2015high}).
And the value function used by GAE is represented by a neural network with two hidden layers of $128$ units and $\mathrm{tanh}$ activation.
To compute the policy gradient during training, additional Gaussian noise (with zero mean and a learnable variance that does not depend on the state) is added to the actions, and the gradient is computed through log likelihood ratio.
After $100$ rounds of training, the expert policy can consistently achieve the maximum sum of rewards both with and without the additional Gaussian noise.
After the expert policy is trained, during online IL, Gaussian noise is not added in order to reduce the variance in the experiments.

\subsection{Learner Policy Representation} \label{sec:learner}
We let the learner policy be another neural network that has exactly the same architecture as the expert policy with no Gaussian noise added.
\blue{
	In the setting of only training the output layer, we copy the weights for the hidden layer and the input normalizer from those of the expert policy and randomly initialize the weights of the output layer.
	During training, only the weights of the learner's output layer were updated.
	In this way, we can view the learner as a \emph{linear} policy using the representation of the expert policy.
	In the setting of training the full network, we still copy the input normalizer from that of the expert policy but we randomly initialize all the variables in the network, i.e., weights and biases of the hidden and output layers.
	During training, all of these variables were updated.
}
\subsection{Online IL Setup}
\paragraph{Policy class}
We conduct online IL with unbiased and biased policy classes.
One one hand, we define the unbiased class as all the policies satisfying the representation in \cref{sec:learner}.
On the other hand, we define the biased policy classes by imposing an additional $\ell_2$-norm constraint \blue{with different sizes} on the learner's weights in the output layer so that the learner cannot perfectly mimic the expert policy.
More concretely, in the experiments, the $\ell_2$-norm constraint has sizes $\{0.1, 0.12, 0.15\}$. \blue{This set of constraints was chosen based on the observation that the $\ell_2$-norm of the final policy trained without the constraint is about $0.18$ when training the output layer only and about $0.23$ when training the full network.}

\paragraph{Loss functions}
We select $\exlossx = \E_{s \sim d_{\pi_{\theta_n}}} [H_\mu(\pi_\theta(s) - \expert(s))]$ as the online IL loss (see \cref{sec:reducing policy optimization}), where $H_\mu$ is the Huber function 
defined as
$H_\mu(x) = \frac{1}{2} x^2$ for $\abs{x} \le \mu$ and $
	\mu \abs{x} - \frac{1}{2}\mu^2$ for $\abs{x} > \mu$.
In the experiments,
$\mu$ is set to $0.05$; as a result, $H_\mu$ is linear when its function value is larger than $0.00125$.
Because the learner's policy is linear, this online loss is CSN in the unknown weights of the learner.

\paragraph{Policy update rule}
We choose AdaGrad~\citep{mcmahan2010adaptive,duchi2011adaptive} as the online algorithm in~\cref{alg:online imitation learning}; AdaGrad is a first-order mirror descent algorithm \blue{and well matches the assumptions made in our theorems (\cref{app:adaptive stepsizes})},
When the $\ell_2$-norm constraint is imposed, an additional projection step is taken after taking a gradient step using AdaGrad.
The final algorithm is a special case of the DAgger algorithm~\citep{ross2011reduction} (called DAggereD in~\citep{cheng2018fast}) with only first-order information and continuous actions~\citep{cheng2018fast}.
In the experiments, the stepsize is set to $0.01$.
In each round, for updating the learner policy, $1000$ samples, i.e., state and expert action pairs, are gathered, and for computing the loss $\exloss_n(\theta_n)$, more samples ($5000$ samples) are used due to the randomness in the initial state of the MDP.
\blue{The total number of iterations is 500 for both the training output layer and the training full network experiments. Due to the randomness in the initial state of the MDP and the initialization of the policy, we averaged the results over 4 random seeds.
}

\paragraph{Hyperparameter tuning}
The hyperparameters are tuned in a very coarse manner. We eliminated the ones that are obviously not proper.
Here are the hyperparameters we have tried.
The stepsize in online IL: $0.1, 0.01, 0.001$.
The Huber function parameter $\mu$: $0.05$.

\end{document}